\def\eqref#1{equation~\ref{#1}}
\def\1{\bm{1}}
\def\eps{{\epsilon}}
\DeclareMathAlphabet{\mathsfit}{\encodingdefault}{\sfdefault}{m}{sl}
\SetMathAlphabet{\mathsfit}{bold}{\encodingdefault}{\sfdefault}{bx}{n}
\newcommand{\E}{\mathbb{E}}
\newcommand{\EE}{\mathbb{E}}
\newcommand{\bx}{\mathbf{x}}
\newcommand{\bv}{\mathbf{v}}
\newcommand{\bI}{\mathbf{I}}
\newcommand{\bz}{\mathbf{z}}
\newcommand{\bh}{\mathbf{h}}
\newcommand{\cN}{\mathcal{N}}
\newcommand{\rank}{{\rm{rank}}}
\newcommand{\tbh}{\tilde{\mathbf{h}}}
\newcommand{\tbg}{\tilde{\mathbf{g}}}
\newcommand{\bg}{\mathbf{g}}
\newcommand{\barbg}{\bar{\mathbf{g}}}
\newcommand{\barbh}{\bar{\mathbf{h}}}
\newcommand{\barh}{\bar{h}}
\newcommand{\tbx}{\tilde{\mathbf{x}}}
\definecolor{darkred}{RGB}{150,0,0}
\definecolor{darkgreen}{RGB}{0,150,0}
\definecolor{darkblue}{RGB}{0,0,200}
\numberwithin{equation}{section}
\renewcommand{\algorithmicrequire}{\textbf{Input:}}
\renewcommand{\algorithmicensure}{\textbf{Output:}}
\newtheorem{theorem}{\textbf{Theorem}}
\newtheorem{lemma}{\textbf{Lemma}}
\newtheorem{appendix_theorem}{Theorem}
\newtheorem{theorem*}{Theorem}
\newtheorem*{lemma_appendix}{Lemma}
\newtheorem{corollary}{\textbf{Corollary}}
\theoremstyle{definition}
\newtheorem{definition}{\textbf{Definition}}[section]
\title{Certifiably Robust Interpretation in Deep Learning}
\author[1]{Alexander Levine}
\author[2]{Sahil Singla}
\author[3]{Soheil Feizi}
\affil[1,2,3]{University of Maryland, College Park}
\affil[3]{Corresponding Author (sfeizi@cs.umd.edu)}
\date{}
\begin{document}

\maketitle

\begin{abstract}
Deep learning interpretation is essential to explain the reasoning behind model predictions. Understanding the robustness of interpretation methods is important especially in sensitive domains such as medical applications since interpretation results are often used in downstream tasks. Although gradient-based saliency maps are popular methods for deep learning interpretation, recent works show that they can be vulnerable to adversarial attacks. In this paper, we address this problem and provide a certifiable defense method for deep learning interpretation. We show that a {\it sparsified} version of the popular {\it SmoothGrad} method, which computes the average saliency maps over random perturbations of the input, is certifiably robust against adversarial perturbations. We obtain this result by extending recent bounds for certifiably robust smooth classifiers to the interpretation setting. Experiments on ImageNet samples validate our theory.


\end{abstract}

\section{Introduction} \label{sec:intro}

 \begin{figure}[ht!]
    \centering
    \includegraphics[width=\textwidth]{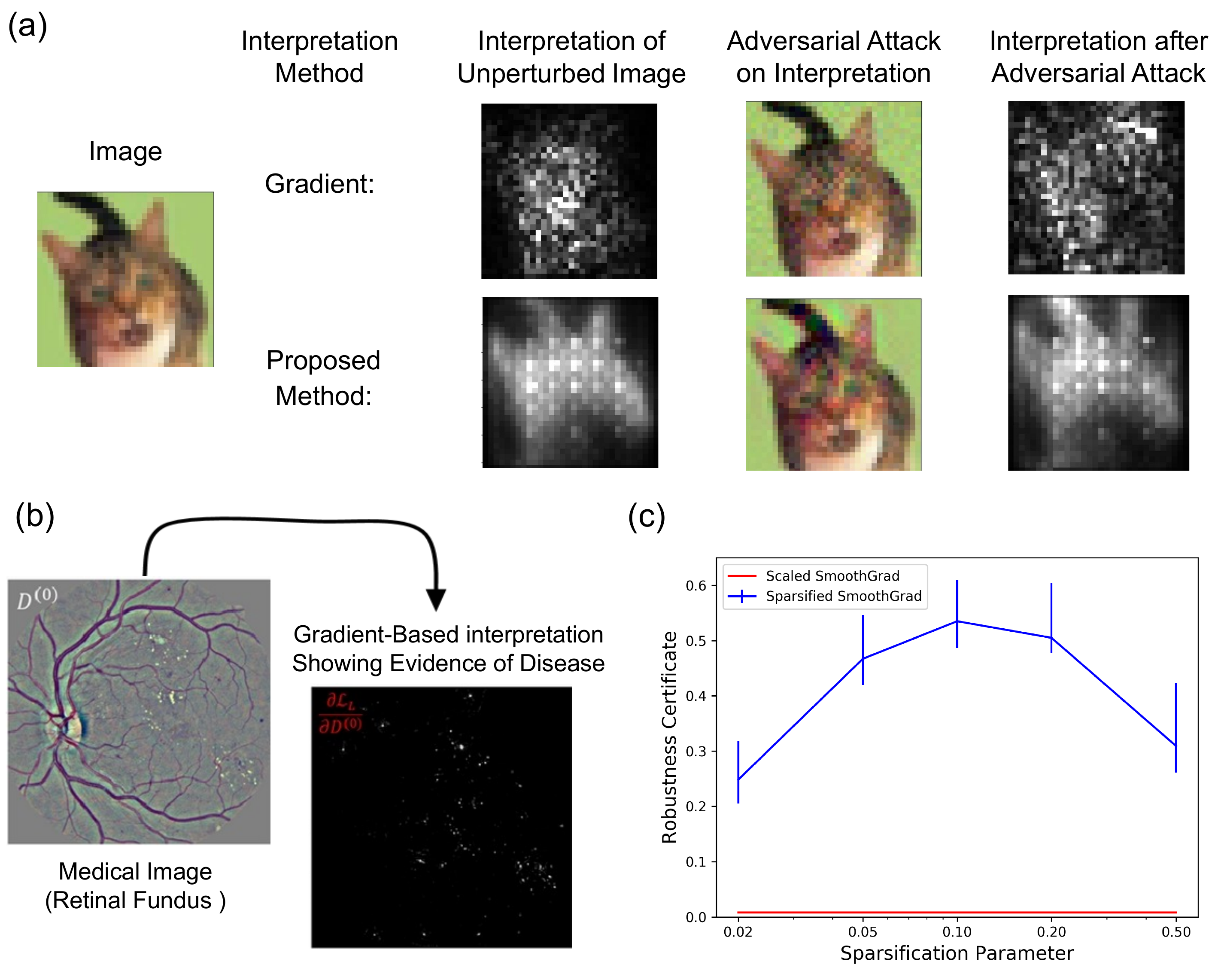}
    \caption{(a) An illustration of the sensitivity of gradient-based saliency maps to an adversarial perturbation of an image from CIFAR-10. Our proposed method, Sparsified SmoothGrad (relaxed variant shown here; see Section \ref{sec:sparse-smooth-grad}), produces saliency maps which considerably are more robust to adversarial attack than plain gradient saliency maps. 
    (b) A medical use of gradient-based saliency maps. Figures are borrowed from \citet{quellec2017deep}. Signs of lesions indicative of diabetic retinopathy are automatically highlighted. (c) A comparison of robustness certificate values $R^{\text{cert}}/K$ of Sparsified SmoothGrad vs. scaled SmoothGrad, on ImageNet samples.}
    \label{fig:sparsified vs. smoothgrad}
\end{figure}
\subsection{Motivation}
The growing use of deep learning in a wide range of highly-sensitive applications, such as autonomous driving, medicine, finance, and even the legal system \citep{teichmann2018multinet,quellec2017deep,fischer2018deep,nissan2017digital}, raises concerns about human trust in machine learning systems. Interpretations, which explain why certain predictions are made, are critical for establishing this trust between users and the machine learning system. Moreover, the interpretation results themselves are often used for downstream tasks. For example, gradient-based saliency maps have been used in medical applications: \citet{quellec2017deep} uses gradient-based interpretation techniques to highlight signs of retinal disease in retinal fundus photographs (see Figure \ref{fig:sparsified vs. smoothgrad}-b.) Similarly, \citet{BenTaieb} uses a gradient-based saliency map as part of a pipeline to automatically analyse histopathology slides of colon tumours. Techniques like these are also used in medical research to annotate datasets \citep{lahiani2018generalizing}. Outside of the medical field, gradient-based interpretations can be used in general-purpose image segmentation tasks \citep{NIPS2015_5858}.

However, \citet{ghorbani} has shown that several gradient-based interpretation methods are sensitive to adversarial examples, obtained by adding a small perturbation to the input image. These adversarial examples maintain the original class label while greatly distorting the saliency map (Figure \ref{fig:sparsified vs. smoothgrad}-a: see     `Gradient' interpretation method).  Although adversarial attacks and defenses on image classification have been studied extensively in recent years \citep{Szegedy2014IntriguingPO, Uesato2018AdversarialRA, Goodfellow2015ExplainingAH, Athalye2018ObfuscatedGG, Athalye2018SynthesizingRA, BuckmanRRG18, Kurakin2017AdversarialEI, PapernotM16, carliniwagner, Madry2018TowardsDL}, less attention has been paid to effectively defending deep learning interpretation against adversarial examples \citep{lee2018towards,2019arXiv190504172E}. This is partially due to the difficulty of protecting high-dimensional saliency maps compared to defending a class label, as well as to the lack of a ground truth for interpretation. Interestingly, we have observed that the standard adversarial training used for the \textit{classification} robustness does {\it not} lead to robust interpretations (see Section \ref{sec:experiments-smooth-methods}). This calls for better understanding the robustness of interpretation methods and developing defenses with performance guarantees.

\subsection{Proposed Approach and Main Contributions}

In the last couple of years, several approaches have been proposed for interpreting neural network outputs \citep{simonyan2013deep, sundararajan2017axiomatic, 2018arXiv180607538A, adebayo-sanity, Huang2017RobustSD}. Specifically, \citet{simonyan2013deep}
 computes the elementwise absolute value of the gradient of the largest class score with respect to the input. To define some notation, let $\bg(\bx)$ be this most basic form of the gradient-based saliency map, for an input image $\bx\in \mathbb{R}^n$. For simplicity, we also assume that elements of $\bg(\bx)$ have been linearly normalized to be between $0$ and $1$. $\bg(\bx)$ represents, to a first order linear approximation, the importance of each pixel in determining the class label. Numerous variations of this method have been introduced, which we review in the appendix.

A popular saliency map method which extends the basic gradient method is SmoothGrad \citep{smoothgrad}, which takes the average gradient over random perturbations of the input. Formally, we define the smoothing function as:
\begin{align}\label{eq:smoothgrad}
\barbg(\bx):=\EE\left[\bg(\bx+\epsilon)\right],    
\end{align}
where $\eps$ has a normal distribution (i.e. $\eps\sim \cN(0,\sigma^2 \bI)$). We will discuss other smoothing functions in Section \ref{sec:sparse-smooth-grad} while the empirical smoothing function which computes the average over finitely many perturbations of the input will be discussed in Section \ref{sec:population-robustness}. We refer to the basic method described in the above equation as the {\it scaled} SmoothGrad \footnote{The original definition of SmoothGrad does not normalize and take the absolute values of gradient elements before averaging. We start with the definition of \eqref{eq:smoothgrad} because it can more easily be compared to our certifiably robust interpretation scheme discussed in Section \ref{sec:certificates}.}.

Since a ground truth label for interpretation is not available, we use a similarity metric between the original and perturbed saliency maps as an estimate of the interpretation robustness. \citet{ghorbani} introduced a \textit{top-K overlap} metric for this purpose: define $R(\bx,\tbx,K)$ as the number of overlapping elements between top $K$ largest elements of saliency maps of $\bx$ and the perturbed image $\tbx$. \citet{ghorbani} developed an $L_\infty$ attack on $R(\bx,\tbx,K)$, which we adapt to the $L_2$ case to test empirical robustness: see the appendix for details. This top-K overlap metric is naturally motivated: absolute magnitude has little meaning in saliency maps, which are nearly always presented as normalized. Moreover, it is common \citep{sundararajan2017axiomatic} to clip the highest values when displaying saliency maps, so that more than a few pixels are clearly visible. This suggests that relative ranks of salience values (and not their absolute values) are important for interpretation.

Note that for an input $\bx$, $R(\bx,\tbx,K)$ depends on the specific perturbation $\tbx$. We define $R^*(\bx,K)$ as the robustness measure with respect to the {\it worst} perturbation of $\bx$. That is,
\begin{align}
R^*(\bx,K):= \min_{\tbx}~~ &R(\bx,\tbx,K)\\
&\|\tbx-\bx\|_2\leq \rho.\nonumber
\end{align}
For deep learning models, this optimization is non-convex in general. Thus, characterizing the true robustness of interpretation methods is likely intractable for most modern classifier models. 


In this paper, we show that a lower bound on the true robustness value of an interpretation method (i.e. a robustness certificate) can be computed efficiently. In other words, for a given input $\bx$, we compute a robustness certificate $R^{\text{cert}}$ such that $R^{\text{cert}}(\bx,K) \leq R^*(\bx,K)$. To establish the robustness certificate for saliency map methods, we prove the following result for a general function $\bh(.)$ whose range is between 0 and 1: 
\addtocounter{theorem*}{1}
\begin{theorem*}
Let $\bh(\bx)$ be the output of an interpretation method whose range is between 0 and 1 and let $\barbh$ be its smoothed version defined as in \eqref{eq:smoothgrad}. Let $\barbh_i(\bx)$ and $\barbh_{[i]}(\bx)$ be the $i$-th element and the $i$-th largest elements of $\barbh(\bx)$, respectively. Let $\Phi$ be the cdf of the normal distribution. If 
\begin{equation}
\Phi\left(\Phi^{-1}\left(\barbh_{[i]}(\bx)  \right)- \frac{2\rho}{\sigma}\right)\ge \barbh_{[2K-i]}(\bx), 
\end{equation}
then for the smoothed interpretation method, we have $R^{\text{cert}}(\bx,K)\geq i$.
\end{theorem*}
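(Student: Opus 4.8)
The plan is to prove the stronger statement that for \emph{every} perturbation $\tbx$ with $\|\tbx-\bx\|_2\le\rho$, at least $i$ of the top-$K$ coordinates of $\barbh(\bx)$ remain among the top-$K$ coordinates of $\barbh(\tbx)$; minimizing over $\tbx$ then yields $R^*(\bx,K)\ge i$, which justifies certifying $R^{\text{cert}}(\bx,K)\ge i$. The single analytic input I would use is the randomized-smoothing Lipschitz property: since each coordinate $\barbh_m(\bx)=\EE[\bh_m(\bx+\eps)]$ is the Gaussian convolution of a $[0,1]$-valued function with $\eps\sim\cN(0,\sigma^2\bI)$, the map $\bx\mapsto\Phi^{-1}(\barbh_m(\bx))$ is $\tfrac{1}{\sigma}$-Lipschitz in $\ell_2$. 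This is precisely the extension of the Cohen et al.\ classifier bound alluded to earlier, and everything else is ordering and counting.

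Next I would translate this into two one-sided bounds valid throughout the ball. For a coordinate $j$ among the top $i$ of $\barbh(\bx)$ we have $\barbh_j(\bx)\ge\barbh_{[i]}(\bx)$, so Lipschitzness with monotonicity of $\Phi,\Phi^{-1}$ gives the floor
$$\barbh_j(\tbx)\ \ge\ \Phi\!\left(\Phi^{-1}\!\left(\barbh_{[i]}(\bx)\right)-\tfrac{\rho}{\sigma}\right),$$
while for a coordinate $l$ of rank at least $2K-i$, where $\barbh_l(\bx)\le\barbh_{[2K-i]}(\bx)$, the same reasoning gives the ceiling
$$\barbh_l(\tbx)\ \le\ \Phi\!\left(\Phi^{-1}\!\left(\barbh_{[2K-i]}(\bx)\right)+\tfrac{\rho}{\sigma}\right).$$
Applying $\Phi^{-1}$ to the hypothesis rewrites it as $\Phi^{-1}(\barbh_{[i]}(\bx))-\tfrac{\rho}{\sigma}\ge\Phi^{-1}(\barbh_{[2K-i]}(\bx))+\tfrac{\rho}{\sigma}$, which is exactly what chains the floor above the ceiling; the factor $2\rho/\sigma$ arises because one $\rho/\sigma$ is spent pushing the top coordinates down and another pushing the tail coordinates up. The conclusion is a \emph{domination} property: at $\tbx$ every top-$i$ coordinate of $\barbh(\bx)$ has value at least as large as every coordinate of rank $\ge 2K-i$.

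Finally I would run a counting argument on the new top-$K$ set $T$ of $\barbh(\tbx)$. Partition the indices by original rank into the top-$i$ block $H$, the middle block $M$ (ranks $i+1,\dots,2K-i-1$), and the tail $L$ (ranks $\ge 2K-i$); the original top-$K$ equals $H$ together with the ranks $i+1,\dots,K\subseteq M$. If $T$ contains no coordinate of $L$, then $T\subseteq H\cup M$, and since only the $K-i-1$ indices of ranks $K+1,\dots,2K-i-1$ in $H\cup M$ fall outside the original top-$K$, at least $K-(K-i-1)=i+1$ coordinates of $T$ lie in the original top-$K$. If instead $T$ meets $L$, pick $l_0\in T\cap L$; by domination every coordinate of $H$ has perturbed value $\ge\barbh_{l_0}(\tbx)$, forcing all $i$ coordinates of $H$ into $T$, and since $H$ is contained in the original top-$K$ this again gives overlap $\ge i$. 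Either way $R(\bx,\tbx,K)\ge i$. I expect the combinatorial bookkeeping of this last step --- getting the block $M$ and the threshold $2K-i$ to align, and handling ties in the ranking by a fixed tie-breaking rule --- to be the only delicate part, since all the analytic content is already packaged into the Lipschitz bound.
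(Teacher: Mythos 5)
Your proposal is correct and follows essentially the same route as the paper: the $\tfrac{1}{\sigma}$-Lipschitz property of $\Phi^{-1}(\barbh_m(\cdot))$ (the paper's Lemma, proved via the Neyman--Pearson half-space argument of Cohen et al.), then the pairwise domination statement with the $2\rho/\sigma$ gap split exactly as you describe (the paper's Theorem 1), and finally a counting argument keyed to the threshold $2K-i$. The only difference is cosmetic --- you organize the last step as a direct two-case analysis on whether the new top-$K$ set meets the tail block, whereas the paper argues by contradiction with a pigeonhole step, but both rest on the identical domination property and tie-breaking caveat.
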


Intuitively, this means that, if there is a sufficiently large {\it gap} between the $i$-th largest element of the smoothed saliency map and its $(2K-i)$-th largest element, then we can certify that at least $i$ elements in the top $K$ largest elements of the original smoothed saliency map will also be in the top $K$ elements of adversarially perturbed saliency map. We present a more general version of this result with {\it empirical} expectations for smoothing in Section \ref{sec:certificates}, as well as another rank-based robustness certificate in Section \ref{sec:median-bound}. The proof of this bound relies on an extension of \citet{cohen} which addresses certified robustness in the classification case. Proofs for all theorems are given in the appendix.

Evaluating the robustness certificate for the scaled SmoothGrad method on ImageNet samples produces vacuous bounds (Figure \ref{fig:sparsified vs. smoothgrad}-c). This motivated us to develop variations of SmoothGrad with larger robustness certificates. One such variation is {\it Sparsified SmoothGrad} which is defined by smoothing a sparsification function that maps the largest elements of $\bg(\bx)$ to one and the rest to zero. Sparsified SmoothGrad obtains a considerably larger value of the robustness certificate (Figure \ref{fig:sparsified vs. smoothgrad}-c) while producing high-quality saliency maps (Figure \ref{fig:geese}). We study other variations of Sparsified SmoothGrad in Section \ref{sec:certificates}. 

A considerably different certification result for robust interpretation is provided by \citet{lee2018towards}. That work provides a certified $L_2$ radius in which a first-order gradient saliency map is \textit{exactly identical} to the saliency map for the unperturbed image. The method depends on the network structure being locally linear (ReLU-based): the inner radius of the locally linear region is directly calculated.  By contrast, our method makes no demands on the structure of the classifier, and could be applied to saliency maps generated by methods other than computing simple gradients: any bounded saliency function can be used as $\bh(\bx)$ in our Theorem 1. Additionally, our certification method is computationally inexpensive enough to allow certificates to be computed on ImageNet-scale samples. 



\section{Smoothing for Certifiably Robust Interpretation}\label{sec:certificates}

\subsection{Notation}
We introduce the following notations to indicate Gaussian smoothing: for a function $\bh$, we define population and empirical smoothed functions, respectively, as:
\begin{equation}
    \begin{split}
     \barbh(\bx) =& \E_{\epsilon \sim \mathcal{N}(0,\sigma^2 I)}[h(\bx+\epsilon)],\\
      \tbh(\bx) =& \frac{1}{q}\sum_{i=1}^q h(\bx+\epsilon_{i}),\quad \epsilon_{i} \sim \mathcal{N}(0,\sigma^2 I).\\
    \end{split}
  \end{equation}
In other words, $\barbh(\bx)$ represents the expected value of $\bh(\bx)$ when smoothed under normal perturbations of $\epsilon$ with some standard deviation $\sigma$ while $\tbh(\bx)$ represents an empirical estimate of $\barbh(\bx)$ using $q$ samples. We call $\sigma^{2}$ the smoothing variance and $q$ the number of smoothing perturbations.

We use $\bv_{i}$ to denote the $i^{th}$ element of the vector $\bv$. Similarly $\bh_i(\bx)$ denotes the $i^{th}$ element of the output $\bh(\bx)$. We also define, for any $\bh(\bx)$, $\rank(\bh(\bx), i)$ as the ordinal rank of $\bh_i(\bx)$ in $\bh(\bx)$ (in the descending order): $\rank(\bh(\bx), i) = j$ denotes that  $\bh_i(\bx)$ is the $j^{th}$ largest element in $\bh(\bx)$. We use $\bx_{[i]}$ to denote the $i^{th}$ largest element in $\bx$. If $i$ is not an integer,  the ceiling of $i$ is used. 

\subsection{Robustness Certificate}\label{sec:population-robustness}
In order to derive a robustness certificate for saliency maps, we present an extension of the classification robustness result of \citet{cohen} to real-valued functions, rather than discrete classification functions. In our case, we will apply this to the saliency map vector $\bg$. First, we define a {\it floor} function to simplify notation.
\theoremstyle{definition}
\begin{definition}\label{def:floor}
The Floor function is a function $L:[0, 1] \rightarrow [0,1]$, such that 
$$
L(z) = \Phi\left(\Phi^{-1}\left(z  \right)- \frac{2\rho}{\sigma}\right)
$$
where $\rho$ denotes the $L_{2}$ norm of the adversarial distortion and $\sigma^{2}$ denotes the smoothing variance. $\Phi$ is the cdf function for the standard normal distribution and $\Phi^{-1}$ is its inverse.
\end{definition}
Below (Theorem \ref{mainbound}) is a general result which can be used to derive robustness certificates for interpretation methods. In particular, it is used to derive our robustness certificate, Theorem \ref{topkcert}, later in the section:
\begin{theorem} \label{mainbound} Let $\bh :\mathbb{R}^n \rightarrow \left[0,1\right]^n$ be a real-valued function. Let $L(.)$ be the floor function defined as in \eqref{def:floor} with parameters $\sigma^2$ and $\rho$. Using \ $\sigma^{2} \in \mathbb{R} $ as the smoothing variance for $\bh$, $\forall\ i,j \in [n],\ \bx,\ \tbx \in\mathbb{R}^n$ \text{ where } $\|\bx-\tbx\|\leq \rho$:
\[
L\left(\barbh_i(\bx) \right)\ge \barbh_j(\bx) \Rightarrow \barbh_i(\tbx)  \ge  \barbh_j(\tbx).
\]
\end{theorem}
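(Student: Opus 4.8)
The plan is to reduce the vector statement to a coordinate-wise scalar Lipschitz property and then chain two applications of it. The central tool is the extension of the randomized-smoothing bound of \citet{cohen} from classification to real-valued base functions: for any measurable $f : \RR^n \to [0,1]$ with Gaussian-smoothed version $\bar f(\bx) = \E_{\epsilon \sim \cN(0,\sigma^2 I)}[f(\bx+\epsilon)]$, the composed map $\bx \mapsto \Phi^{-1}(\bar f(\bx))$ is $(1/\sigma)$-Lipschitz. Since Gaussian smoothing acts coordinatewise, each $\barbh_i$ is exactly the smoothing of the scalar function $h_i$, so I can apply this lemma separately with $f = h_i$ and $f = h_j$; this is what transfers the ordering of the smoothed outputs from $\bx$ to $\tbx$.

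First I would establish the scalar lemma. Fix a coordinate and a displacement $\delta = \tbx - \bx$ with $\|\delta\| \le \rho$, and set $p := \bar f(\bx)$. Writing $\bar f(\bx+\delta) = \E_{Y \sim \cN(\bx+\delta,\, \sigma^2 I)}[f(Y)]$ and $p = \E_{X \sim \cN(\bx,\, \sigma^2 I)}[f(X)]$, the problem of minimizing the former over all $f : \RR^n \to [0,1]$ subject to the latter being fixed is a Neyman--Pearson problem. Because the objective and constraint already permit values in $[0,1]$ (randomized tests), the Neyman--Pearson lemma identifies the minimizer as a likelihood-ratio threshold, which for two equal-covariance Gaussians is a half-space orthogonal to $\delta$. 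Evaluating the Gaussian measure of that half-space yields $\bar f(\bx+\delta) \ge \Phi(\Phi^{-1}(p) - \|\delta\|/\sigma)$, and symmetrically an upper bound; together these are equivalent to $(1/\sigma)$-Lipschitzness of $\Phi^{-1}\circ \bar f$. This is precisely where the indicator-valued argument of \citet{cohen} is generalized to real-valued $f$, and it is the only step involving genuine analysis.

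With the lemma in hand the remainder is bookkeeping. Applying it to coordinate $i$ gives $\Phi^{-1}(\barbh_i(\tbx)) \ge \Phi^{-1}(\barbh_i(\bx)) - \rho/\sigma$, and to coordinate $j$ gives $\Phi^{-1}(\barbh_j(\tbx)) \le \Phi^{-1}(\barbh_j(\bx)) + \rho/\sigma$, using $\|\bx-\tbx\| \le \rho$ in both. Subtracting, $\Phi^{-1}(\barbh_i(\tbx)) - \Phi^{-1}(\barbh_j(\tbx)) \ge \Phi^{-1}(\barbh_i(\bx)) - \Phi^{-1}(\barbh_j(\bx)) - 2\rho/\sigma$. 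Meanwhile, by the definition of $L$ and the monotonicity of $\Phi^{-1}$, the hypothesis $L(\barbh_i(\bx)) \ge \barbh_j(\bx)$ unpacks into $\Phi^{-1}(\barbh_i(\bx)) - 2\rho/\sigma \ge \Phi^{-1}(\barbh_j(\bx))$, which is exactly the assertion that the right-hand side above is nonnegative. Hence $\Phi^{-1}(\barbh_i(\tbx)) \ge \Phi^{-1}(\barbh_j(\tbx))$, and applying the increasing function $\Phi$ gives $\barbh_i(\tbx) \ge \barbh_j(\tbx)$, as required.

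The main obstacle is the scalar lemma, i.e. verifying that the half-space worst-case bound survives the passage from $\{0,1\}$-valued to arbitrary $[0,1]$-valued $f$. I expect this to go through because the Neyman--Pearson characterization already admits randomized tests, but care is needed that the optimizing level set is attained (so the bound is tight) and that the edge cases $p \in \{0,1\}$, where $\Phi^{-1}(p) = \pm\infty$, are handled by continuity. Everything after that---the two Lipschitz applications and the rearrangement of the hypothesis---requires only the strict monotonicity of $\Phi$ and $\Phi^{-1}$.
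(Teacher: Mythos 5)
Your proof is correct and follows the same overall architecture as the paper's: isolate the scalar lemma that $\Phi^{-1}\circ\barbh_i$ is $(1/\sigma)$-Lipschitz, apply it once to coordinate $i$ (lower bound) and once to coordinate $j$ (upper bound), and chain with the monotonicity of $\Phi$ and $\Phi^{-1}$ after unpacking the hypothesis $L(\barbh_i(\bx))\ge\barbh_j(\bx)$ into $\Phi^{-1}(\barbh_i(\bx))-2\rho/\sigma\ge\Phi^{-1}(\barbh_j(\bx))$. The one place where your route genuinely diverges is the proof of the lemma itself. The paper does not invoke the Neyman--Pearson lemma for randomized tests directly; instead it reduces the $[0,1]$-valued case to the $\{0,1\}$-valued case by a Bernoulli randomization trick --- defining $H(\bx)\sim\operatorname{Bern}(h(\bx))$ so that $\E[H(\bx+\epsilon)]=\barh(\bx)$ --- and then applies Lemma 4 of \citet{cohen} (which already covers random $\{0,1\}$-valued functions) to $H$ with the two half-spaces $S_-$ and $S_+$. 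Your direct argument, treating $f:\RR^n\to[0,1]$ as a randomized test and identifying the worst case as a half-space via the likelihood-ratio characterization, is essentially the independent proof of \citet{salman2019provably} that the paper's own footnote acknowledges. The two devices buy the same bound with the same tightness; the Bernoulli reduction lets one quote Cohen's lemma verbatim without re-deriving the optimality of half-space tests, while your version makes the variational (``worst-case $f$'') structure explicit. Your attention to the degenerate endpoints $p\in\{0,1\}$ is a care the paper glosses over, though the implication is vacuous or trivial there. No gaps.
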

\begin{figure}[th!]
\centering
   \begin{subfigure}{0.49\linewidth} \centering
     \includegraphics[scale=0.45]{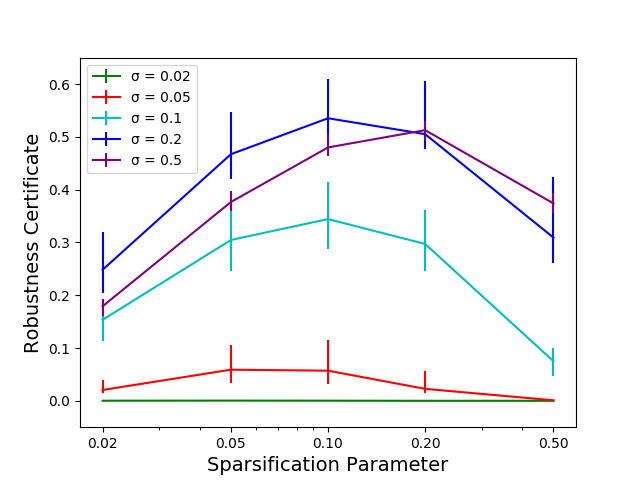}
     \caption{Sparsified SmoothGrad}\label{zoimnet}
   \end{subfigure}
   \begin{subfigure}{0.49\linewidth} \centering
     \includegraphics[scale=0.45]{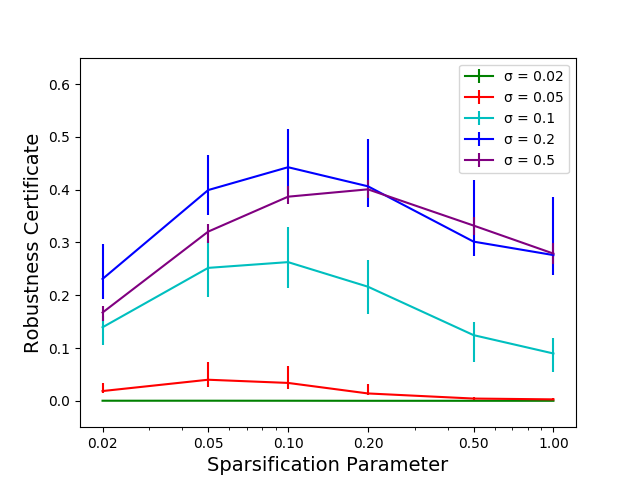}
     \caption{Relaxed Sparsified SmoothGrad}\label{spimnet}
   \end{subfigure}
\caption{Certified robustness bounds on ImageNet for different values of the sparsification parameter $\tau$. The lines shown are for the $60^{th}$ percentile guarantee, meaning that 60 percent of images had guarantees at least as tight as those shown. For both examples, $K=0.2n$, and $\rho=0.03$ (in units where pixel intensity varies from $0$ to $1$.)} \label{fig:sparsevrel}
\end{figure}
Note that this theorem is valid for any general function. However, we will use it for our case where $\barbh(\bx)$ is a smoothed saliency map. Theorem \ref{mainbound} states that, for a given saliency map vector $\barbh(\bx)$, if $L(\barbh_i(\bx))\ge \barbh_j(\bx)$, then if $\bx$ is perturbed inside an $L_{2}$ norm ball of radius at most $\rho$, $\barbh_i(\tbx)  \ge  \barbh_j(\tbx)$.
 This result extends Theorem 1 in \citet{cohen}  in two ways: first, it provides a guarantee about the difference in the values of two quantities, which in general might not be related, while the original result compared probabilities of two mutually exclusive events. Second, we are considering a real-valued function $\bh$, rather than a classification output which can only take discrete values. 
  This bound can be compared directly to \citet{lecuyer2018certified}'s result which similarly concerns unrelated elements in a vector. Just as in the classification case (as noted by \citet{cohen}), Theorem \ref{mainbound} gives a significantly tighter bound than that of \citet{lecuyer2018certified} (see details in the appendix). 

We can extend Theorem \ref{mainbound} to use empirical estimates of smoothed functions. Following \citet{lecuyer2018certified}, we derive upper and lower bounds of the expected value function $\barbh(\bx)$ in terms of  $\tbh(\bx)$, by applying Hoeffding's Lemma. To present our result for the empirical case, we first define an \textit{empirical floor function} to derive a similar lower bound when the population mean is estimated using a finite number of samples: 
\begin{definition}
The Empirical Floor function is a function $\hat{L}:[0, 1] \rightarrow [0,1]$, such that for given values of $\rho, \ \sigma,\ p,\ q,\ n$, where $\rho$ denotes the maximum $L_{2}$ distortion, $\sigma^{2}$ denotes the smoothing variance, $p$ denotes the probability bound, $q$ denotes the number of perturbations, and $n$ is the size of input of the function:
$$
\hat{L}(z) = \Phi\left(\Phi^{-1}\left(z -c \right)- \frac{2\rho}{\sigma}\right) - c, 
\quad \text{ where } c := \sqrt{\frac{\ln(2n(1-p)^{-1})}{2q}}. \label{eq:c_eq}
$$
\end{definition}

\begin{corollary} \label{probbound}
Let $\bh :\mathbb{R}^n \rightarrow \left[0,1\right]^n$ be a function such that for given values of $q$,\ $\sigma$, 
$\forall\ i,j \in [n],\ \bx,\ \tbx \in\mathbb{R}^n,\ \|\bx-\tbx\|_2 \leq \rho$, with probability at least $p$,
\begin{equation}
    \begin{split}
\hat{L}( \tbh_i(\bx) )\ge \tbh_j(\bx)  \Rightarrow  \barbh_i(\tbx) \ge \barbh_j(\tbx).
\end{split}
\end{equation}
\end{corollary}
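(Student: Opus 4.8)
The plan is to bootstrap the empirical statement off Theorem \ref{mainbound} by controlling the gap between the empirical smoothed values $\tbh(\bx)$ and the population values $\barbh(\bx)$ with a uniform concentration bound, and then checking that the empirical hypothesis $\hat{L}(\tbh_i(\bx)) \ge \tbh_j(\bx)$ forces the population-level premise $L(\barbh_i(\bx)) \ge \barbh_j(\bx)$ required by Theorem \ref{mainbound}. Once that implication holds on a high-probability event, the desired conclusion $\barbh_i(\tbx) \ge \barbh_j(\tbx)$ follows immediately, because the conclusion of Theorem \ref{mainbound} is already stated in terms of population means at the perturbed point $\tbx$ and requires no further estimation.

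First I would set up the concentration. For each coordinate $i$, the estimate $\tbh_i(\bx) = \frac{1}{q}\sum_{k=1}^q \bh_i(\bx+\epsilon_k)$ is an average of $q$ i.i.d.\ random variables taking values in $[0,1]$ with mean $\barbh_i(\bx)$, so Hoeffding's inequality gives $\PP\!\left(|\tbh_i(\bx) - \barbh_i(\bx)| \ge c\right) \le 2\exp(-2qc^2)$. Substituting the prescribed $c = \sqrt{\ln(2n(1-p)^{-1})/(2q)}$ makes the right-hand side equal to $(1-p)/n$, and a union bound over the $n$ coordinates (not the $n^2$ pairs) then yields that, with probability at least $p$, the event
\[
\barbh_i(\bx) \ge \tbh_i(\bx) - c \quad\text{and}\quad \barbh_j(\bx) \le \tbh_j(\bx) + c
\]
holds simultaneously for all $i,j \in [n]$. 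The remainder of the argument is carried out conditioned on this event.

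The key identity is that the floor and empirical floor functions are related by $L(z-c) = \hat{L}(z) + c$, which is immediate from their defining formulas, and that $L$ is increasing since both $\Phi$ and $\Phi^{-1}$ are. Applying $L$ to the lower bound $\barbh_i(\bx) \ge \tbh_i(\bx) - c$ and invoking this identity, I would chain
\[
L(\barbh_i(\bx)) \;\ge\; L(\tbh_i(\bx) - c) \;=\; \hat{L}(\tbh_i(\bx)) + c \;\ge\; \tbh_j(\bx) + c \;\ge\; \barbh_j(\bx),
\]
where the second inequality is exactly the hypothesis of the corollary and the last uses the upper bound from the good event. This furnishes the premise $L(\barbh_i(\bx)) \ge \barbh_j(\bx)$, and Theorem \ref{mainbound} then delivers $\barbh_i(\tbx) \ge \barbh_j(\tbx)$, completing the proof on an event of probability at least $p$.

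The main obstacle I anticipate is a domain technicality: the quantity $\tbh_i(\bx) - c$ may be non-positive, in which case $\Phi^{-1}(\tbh_i(\bx)-c)$ is not finite and the chain above is not literally well defined. I would dispatch this by noting that the hypothesis is vacuous in that regime—if $\tbh_i(\bx) - c \le 0$ then $\hat{L}(\tbh_i(\bx)) \le -c < 0 \le \tbh_j(\bx)$, so $\hat{L}(\tbh_i(\bx)) \ge \tbh_j(\bx)$ cannot hold—so the premise only fires when $\tbh_i(\bx) - c > 0$ and every term in the chain is finite. With the symmetric boundary conventions $\Phi^{-1}(0) = -\infty$ and $\Phi^{-1}(1) = +\infty$, no case in which the implication is asserted falls outside the domain, and the concentration-plus-monotonicity argument goes through without modification.
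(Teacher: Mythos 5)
Your proof is correct and follows essentially the same route as the paper's: Hoeffding plus a union bound over the $n$ coordinates to get the uniform event $|\tbh_i(\bx)-\barbh_i(\bx)|<c$ with probability $p$, then the observation that the empirical premise $\hat{L}(\tbh_i(\bx))\ge\tbh_j(\bx)$ (equivalently $L(\tbh_i(\bx)-c)\ge\tbh_j(\bx)+c$) forces the population premise $L(\barbh_i(\bx))\ge\barbh_j(\bx)$ of Theorem \ref{mainbound}. Your explicit identity $L(z-c)=\hat{L}(z)+c$ and the handling of the $\Phi^{-1}$ domain edge case are welcome clarifications of steps the paper leaves implicit, but they do not constitute a different argument.
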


Note that unlike the population case, this certificate bound is probabilistic. Also note that this bound compares the empirical expectation of the saliency map at an observed point to the true population expectation at all nearby points. This convention is in line with recent works providing probabilistic certificates for smoothed classifiers \citep{cohen,salman2019provably}.

Theorem \ref{mainbound} allows us to derive certificates for the top-$K$ overlap (denoted by $R$). In particular:
\begin{theorem} \label{topkcert}
$\forall\ \bx, \tbx \in\mathbb{R}^n,\ \|\bx-\tbx\|_2 \leq \rho, \ \sigma \in \mathbb{R},\ q \in \mathbb{N}$, define $ R^{\text{cert}}(\bx, K)$ as the largest $i\leq K$ such that $\hat{L}(\tbh_{[i]}(\bx)) \geq \tbh_{[2K-i]}(\bx)$. Then, with probability at least $p$,
\begin{equation}
R^{\text{cert}}(\bx,\ K) \leq R(\bx,\ \tbx,\ K).    
\end{equation}
where $ R(\bx,\ \tbx,\ K)$ denotes the top-$K$ overlap between $\tbh(\bx)$ and $\barbh(\tbx)$.
\end{theorem}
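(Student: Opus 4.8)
The plan is to condition on the high-probability event supplied by Corollary \ref{probbound} and then establish the top-$K$ overlap bound as a purely combinatorial statement about the two now-fixed vectors $\tbh(\bx)$ and $\barbh(\tbx)$. Write $i := R^{\text{cert}}(\bx,K)$, and let $A$, $B$, and $S$ be the index sets of, respectively, the top-$K$ elements of $\tbh(\bx)$, the top-$K$ elements of $\barbh(\tbx)$, and the top-$i$ elements of $\tbh(\bx)$; the goal is exactly $|A\cap B|\ge i$, since $S\subseteq A$ and $R(\bx,\tbx,K)=|A\cap B|$. I will also introduce the auxiliary set $W := \{t : \tbh_t(\bx) > \tbh_{[2K-i]}(\bx)\}$, which satisfies $|W|\le 2K-i-1$.

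First I would record the single structural fact that drives everything. Since $\hat{L}$ is non-decreasing (it is a composition of the increasing maps $z\mapsto z-c$, $\Phi^{-1}$, $\Phi$ followed by a constant shift), every $s\in S$ satisfies $\hat{L}(\tbh_s(\bx))\ge \hat{L}(\tbh_{[i]}(\bx))$, and by the maximality in the definition of $R^{\text{cert}}$ we have $\hat{L}(\tbh_{[i]}(\bx))\ge \tbh_{[2K-i]}(\bx)$. Combining these with the contrapositive of the implication in Corollary \ref{probbound} gives the key claim $(\star)$: for every $s\in S$, any index $t$ with $\barbh_t(\tbx) > \barbh_s(\tbx)$ must lie in $W$. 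Indeed, $\barbh_s(\tbx) < \barbh_t(\tbx)$ forces $\hat{L}(\tbh_s(\bx)) < \tbh_t(\bx)$, whence $\tbh_t(\bx) > \hat{L}(\tbh_s(\bx)) \ge \tbh_{[2K-i]}(\bx)$, i.e. $t\in W$.

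With $(\star)$ in hand I would conclude by a short dichotomy, after observing that $A\subseteq W$ (because $K < 2K-i$), so $|W\setminus A|\le (2K-i-1)-K = K-i-1$. \emph{Case 1: $B\subseteq W$.} Then $B\setminus A\subseteq W\setminus A$, hence $|A\cap B| = K - |B\setminus A| \ge K-(K-i-1) = i+1$. \emph{Case 2: there exists $b\in B\setminus W$.} Applying $(\star)$ to each $s\in S$ gives $\barbh_b(\tbx)\le \barbh_s(\tbx)$ (otherwise $b$ would lie in $W$), so $\barbh_{[K]}(\tbx)\le \barbh_b(\tbx)\le \min_{s\in S}\barbh_s(\tbx)$; thus every element of $S$ is among the top-$K$ of $\barbh(\tbx)$, giving $S\subseteq A\cap B$ and $|A\cap B|\ge i$. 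In both cases the overlap is at least $i$, and since the argument runs entirely on the event of Corollary \ref{probbound}, the stated bound holds with probability at least $p$.

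The main obstacle is conceptual rather than computational: the tempting strategy of arguing that the top-$i$ elements of $\tbh(\bx)$ each survive into the top-$K$ of $\barbh(\tbx)$ is simply false when $i$ is much smaller than $K$, because the adversary can demote any individual high element; a per-element rank bound only yields $\text{rank}\le 2K-i$, which is too weak. The dichotomy is exactly what rescues the proof: either no top-$K$ element of $\barbh(\tbx)$ escapes the slightly enlarged set $W$ — in which case the overlap is forced by the counting bound on $|W\setminus A|$ — or one does escape, and that very escape pins the whole top-$i$ block of $\tbh(\bx)$ inside $B$. A minor loose end to verify separately is the boundary case $i=K$ (where $W$ is the top-$(K-1)$ set and $A\subseteq W$ must be argued directly, forcing $A=B$), together with the tie-breaking convention that makes all ranks well defined; both are routine.
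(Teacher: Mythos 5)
Your proof is correct and is essentially the paper's argument in contrapositive form: the paper proves the same statement by contradiction, using a pigeonhole step to extract an element of the perturbed top-$K$ set with rank at least $2K-i$ in $\tbh(\bx)$ (your Case 2's $b\notin W$) and then the same domination argument via Corollary \ref{probbound} to pin the top-$i$ block inside the perturbed top-$K$; your Case 1 is exactly the contrapositive of that pigeonhole count. Both arguments rely on the identical key facts (monotonicity of $\hat{L}$, uniform applicability of Corollary \ref{probbound} over all index pairs on the probability-$p$ event, and the same tie-breaking caveat the paper also states), so no substantive difference.
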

Intuitively, if there is a sufficiently large {\it gap} between the $i^{th}$ and $(2K-i)^{th}$ largest elements of an empirical smoothed saliency map, then we can certify that the overlap between top $K$ elements of the observed saliency map and any possible perturbed population smoothed saliency map is at least $i$ with probability at least $p$. If the observed saliency map is possibly adversarially corrupted up to a radius of $\rho$, it is thus still guaranteed to be similar to the ``ground truth'' population saliency map.

Note that it requires some modifications of SmoothGrad \citep{smoothgrad} for our bounds to be directly applicable. \citet{smoothgrad} in particular defines two methods, which we will call SmoothGrad and Quadratic SmoothGrad. SmoothGrad takes the mean over samples of the signed gradient values, with absolute value typically taken after smoothing for visualization. Quadratic SmoothGrad takes the mean of the elementwise squares of gradient values. Both methods, therefore, require modification for our bounds to be applied: we define scaled SmoothGrad $\tbg(\bx)$, such that $\bg(\bx)$ is the elementwise absolute value of the gradient, linearly scaled so that the largest element is one. We can similarly define a scaled Quadratic SmoothGrad. We can then apply Theorem \ref{topkcert} directly to scaled SmoothGrad (or scaled Quadratic SmoothGrad), simply by scaling the components of $\bg(\bx)$ (or $\bg(\bx)\odot \bg(\bx)$) to lie in the interval $[0,1]$. However, we observe that this gives vacuous bounds for both of them when
using the suggested hyperparameters from \citet{smoothgrad}. One issue is that the suggested value for $q$ (number of perturbations) is $50$ which is too small to give useful bounds in Corollary \ref{probbound}. For a standard size image from the ImageNet dataset $(n=224\times224\times3=150,528)$, with $p=0.95$, this gives $c = 0.395$ (using Definition \ref{eq:c_eq}). Note that even for a small $\rho$:
$$\hat{L}(z) = \Phi\left(\Phi^{-1}\left(z - c \right)- \frac{2\rho}{\sigma}\right) - c \approx \Phi\left(\Phi^{-1}\left(z - c \right)\right) - c = z-2c$$
Thus the gap between $z$ and $\hat{L}(z)$ is at least $0.79$. We can see from Corollary \ref{probbound} that a gap of $0.79$ (on a scale of 1) is far too large to be of any practical use. We instead take $q = 2^{13}$, which gives a more manageable estimation error of $c=0.031$. However, we found that even with this adjustment, the bounds computed using Theorem \ref{topkcert} are not satisfactory for either scaled SmoothGrad and or scaled Quadratic SmoothGrad (see details in the appendix). This prompted the development of Sparsified SmoothGrad described in Section \ref{sec:sparse-smooth-grad}.

\subsection{Sparsified SmoothGrad and its Relaxation}\label{sec:sparse-smooth-grad}


Scaled SmoothGrad and Quadratic SmoothGrad give vacuous robustness certificates using our proposed method, as demonstrated in Figure \ref{fig:sparsified vs. smoothgrad}.  We therefore develop a new method, {\it Sparsified SmoothGrad}, which has (1) non-vacuous robustness certificates at ImageNet scale (Figure \ref{zoimnet}), (2) similar high-quality visual output to SmoothGrad (Figure \ref{fig:geese}), and (3) theoretical guarantees that aid in setting its hyper-parameters (Section \ref{sec:median-bound}).

The Sparsified SmoothGrad is defined as $\tbg^{[\tau]}$, where $\bg^{[\tau]}$ is defined as follows:
\begin{equation}
     \bg^{[\tau]}_i(\bx) =  \begin{cases}
    0,              & \text{if } \bg_i(\bx) < \bg_{[\tau n] }(\bx)\\
    1,              & \text{if } \bg_i(\bx) \geq \bg_{[\tau n] }(\bx)\\

    \end{cases}
    \end{equation}

In other words,  $\tau$ controls the \textit{degree of sparsification}: a fraction $\tau$ of elements (the largest $\tau n$ elements of $\bg(\bx)$) are assigned to $1$, and the rest are set to $0$.
\begin{figure}[t]
    \centering
    \includegraphics[scale=0.18]{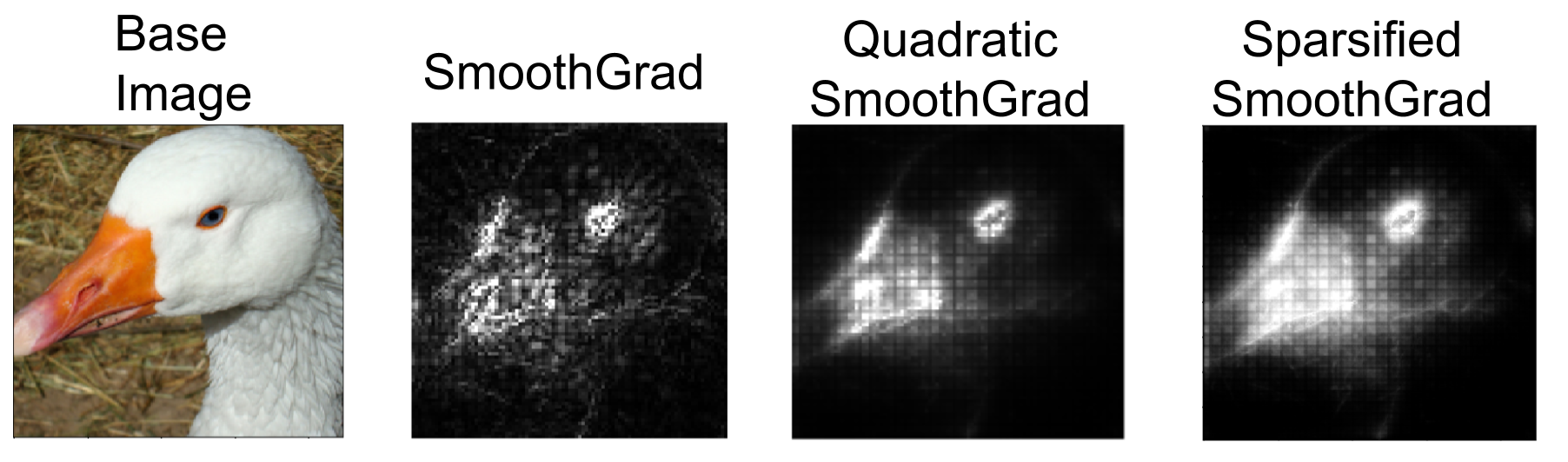}
    \caption{Qualitative comparison of Sparsified SmoothGrad (with the sparsification parameter $\tau = 0.1$) with the SmoothGrad methods defined by \cite{smoothgrad}. All methods lead to high-quality saliency maps while our proposed Sparsified SmoothGrad is certifiably robust to adversarial examples as well. Additional examples have been presented in the appendix.} 
    \label{fig:geese}
\end{figure}

For some applications, it may be desirable to have at least some differentiable elements in the computed saliency map.  For this purpose, we also propose \textit{Relaxed Sparsified SmoothGrad:}

\begin{equation}
     \bg^{[\gamma,\tau]}_i(\bx)  =  \begin{cases}
    0,              & \text{if } \bg_i(\bx) < \bg_{[\tau n] }(\bx)\\
    1,              & \text{if }\bg_i(\bx)  \geq \bg_{[\gamma n] }(\bx)\\
    \frac{\bg_i(\bx) }{ \bg_{[\gamma n] }(\bx)},              & \text{otherwise } \\

    \end{cases}
\end{equation}

Here, $\tau$ controls the \textit{degree of sparsification} and $\gamma$ controls the \textit{degree of clipping}: a fraction $\gamma$ of elements are clipped to 1.
Elements neither clipped nor sparsified are linearly scaled between $0$ and $1$.
Note that Relaxed Sparsified SmoothGrad is a generalization of Sparsified SmoothGrad. With no clipping ($\gamma=0$), we again achieve nearly-vacuous results. However, with only a small degree of clipping ($\gamma = 0.01$), we achieve results very similar (although slightly worse) than sparsifed SmoothGrad; see Figure \ref{spimnet}. 
We use Relaxed Sparsified SmoothGrad in this paper to test the performance of first-order adversarial attacks against Sparsified SmoothGrad-like techniques.

\section{Rank Certificate for the Proposed Sparsified SmoothGrad} \label{sec:median-bound}
In this section, we show that if the {\it median} rank of a saliency map element over smoothing perturbations is sufficiently small (i.e. near the top rank), then for an adversarially perturbed input, that element will certifiably remain near the top rank of the proposed Sparsified SmoothGrad method with high probability. This provides a theoretical justification for the certifiable robustness of the proposed Sparsified SmoothGrad method.

To present this result, we first define the certified {\it rank} of an element in the saliency map as follows:

\begin{definition}[Certified Rank]
For a given input $\bx$ and a given saliency map method (denoted by $\bh: \mathbb{R}^{n} \to \mathbb{R}^{n}$), let the maximum adversarial distortion be $\rho$, i.e. $\|\tbx -\bx\|_{2} \le \rho$. Then, for a probability $p$, the certified rank for an element at index $i$ (denoted by $\rank^{\text{cert}}(\bx, i)$) is defined as the minimum $k$ such that the following condition holds:
$$
 \hat{L}(\tbh_i(\bx)) \geq \tbh_{[k]}(\bx).$$
\end{definition}
If the $i$-th element of the saliency map has a certified rank of $k$, using Corollary 1, we will have:
$$
\barbh_i(\tbx) \geq \barbh_{[k]}(\tbx) ~~~~\text{    with probability at least }p.
$$
That is, the $i^{th}$ element of the population smoothed saliency map is guaranteed to be as large as the smallest $n-k+1$ elements of the smoothed saliency map of any adversarially perturbed input.

Note that certified rank depends on the particular perturbations used to generate the smoothed saliency map $\tbh(\bx)$. In the following result, we show that if the median rank of a gradient element at index $i$, over a set of randomly generated perturbations, is less than a specified threshold value, then the certified rank of that element in the Sparsified SmoothGrad saliency map generated using those perturbations can be upper bounded.

\begin{theorem} \label{ssgrankbound}
Let $U$ be the set of $q$ random perturbations for a given input $\bx$ using the smoothing variance $\sigma^{2}$. Using the Sparsified SmoothGrad method, with probability $p$, we have
\begin{equation}
    \mathop{\text{Median}}_{\epsilon \in U} \left[\rank(\bg(\bx+\epsilon), i) \right] \leq \lceil{\tau n}\rceil \quad \Rightarrow \quad \rank^{\text{cert}}(\bx, i) \leq \frac{\lceil{\tau n}\rceil}{\hat{L}(\frac{1}{2})},\label{eq:rankcert_eq}
\end{equation}
where $\tau$ is the sparsification parameter of the Sparsified SmoothGrad method.
\end{theorem}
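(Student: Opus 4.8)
The idea is to reduce \eqref{eq:rankcert_eq} to two elementary facts about the empirical Sparsified SmoothGrad vector $\tbg^{[\tau]}(\bx)$ (here the base function is $\bh=\bg^{[\tau]}$, so $\tbh=\tbg^{[\tau]}$): first, the median hypothesis forces the $i$-th coordinate $\tbg^{[\tau]}_i(\bx)$ to carry at least half of its mass, i.e. $\tbg^{[\tau]}_i(\bx)\ge\tfrac12$; second, the coordinates of $\tbg^{[\tau]}(\bx)$ obey a fixed ``budget'' (their sum is $\lceil\tau n\rceil$), so by a Markov/sorting argument only few of them can exceed $\hat L(\tfrac12)$. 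Combining these with monotonicity of $\hat L$ and the definition of certified rank gives the bound. I would stress at the outset that, once the random set $U$ is fixed, the implication in \eqref{eq:rankcert_eq} is entirely deterministic; the probability $p$ enters only through the constant $c$ inside $\hat L$ and through the definition of $\rank^{\text{cert}}$, which via Corollary \ref{probbound} certifies the corresponding population inequality $\barbh_i(\tbx)\ge\barbh_{[k]}(\tbx)$ with probability at least $p$.

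\textbf{Step 1 (median $\Rightarrow$ coordinate mass).} By the definition of $\bg^{[\tau]}$, we have $\bg^{[\tau]}_i(\bx+\epsilon)=1$ precisely when $\rank(\bg(\bx+\epsilon),i)\le\lceil\tau n\rceil$. If the median over $\epsilon\in U$ of $\rank(\bg(\bx+\epsilon),i)$ is at most $\lceil\tau n\rceil$, then at least $q/2$ of the perturbations satisfy this rank condition, so at least half of the summands equal $1$; hence
\[
\tbg^{[\tau]}_i(\bx)=\frac1q\sum_{\epsilon\in U}\bg^{[\tau]}_i(\bx+\epsilon)\ \ge\ \frac12 .
\]
Since $\hat L$ is increasing (a composition of the increasing maps $z\mapsto z-c$, $\Phi^{-1}$, a shift by $-2\rho/\sigma$, $\Phi$, and a final shift by $-c$), this yields $\hat L\!\left(\tbg^{[\tau]}_i(\bx)\right)\ge \hat L(\tfrac12)$.

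\textbf{Step 2 (budget constraint and conclusion).} For each fixed $\epsilon$, the vector $\bg^{[\tau]}(\bx+\epsilon)$ has exactly $\lceil\tau n\rceil$ entries equal to $1$ (no ties at the threshold occur almost surely for continuous gradients), so $\sum_j\bg^{[\tau]}_j(\bx+\epsilon)=\lceil\tau n\rceil$; averaging over $U$ gives $\sum_{j=1}^n\tbg^{[\tau]}_j(\bx)=\lceil\tau n\rceil$. As all coordinates are nonnegative, the $k$ largest satisfy $k\cdot\tbg^{[\tau]}_{[k]}(\bx)\le\sum_{l=1}^k\tbg^{[\tau]}_{[l]}(\bx)\le\lceil\tau n\rceil$, so $\tbg^{[\tau]}_{[k]}(\bx)\le\lceil\tau n\rceil/k$ for every $k$. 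Taking $k^\star=\big\lceil \lceil\tau n\rceil/\hat L(\tfrac12)\big\rceil$ (the ceiling being the paper's convention for non-integer indices), we get $k^\star\ge\lceil\tau n\rceil/\hat L(\tfrac12)$ and therefore $\tbg^{[\tau]}_{[k^\star]}(\bx)\le\lceil\tau n\rceil/k^\star\le\hat L(\tfrac12)\le\hat L\!\left(\tbg^{[\tau]}_i(\bx)\right)$. By the definition of certified rank, $\hat L(\tbg^{[\tau]}_i(\bx))\ge\tbg^{[\tau]}_{[k^\star]}(\bx)$ forces $\rank^{\text{cert}}(\bx,i)\le k^\star=\lceil\tau n\rceil/\hat L(\tfrac12)$, which is exactly the claim.

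\textbf{Main obstacle.} The delicate part is the bookkeeping that glues the median hypothesis to the empirical mean and handles ceilings and ties cleanly: one must verify that ``median rank $\le\lceil\tau n\rceil$'' genuinely places at least half the mass on the value $1$ (and not merely below some strictly larger rank), and that a tie exactly at the sparsification threshold cannot inflate the per-sample budget beyond $\lceil\tau n\rceil$ (which is why the continuity/no-ties remark is needed). Everything else — monotonicity of $\hat L$ and the sorting inequality $\tbg^{[\tau]}_{[k]}(\bx)\le\lceil\tau n\rceil/k$ — is routine. The only substantive standing assumption is $\hat L(\tfrac12)>0$, since otherwise the right-hand side of \eqref{eq:rankcert_eq} is infinite and the certificate is vacuous.
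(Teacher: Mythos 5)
Your proof is correct and follows essentially the same route as the paper's: the median hypothesis forces at least half of the indicator summands to equal $1$ so that $\tbg^{[\tau]}_i(\bx)\ge\tfrac12$, monotonicity of $\hat L$ then gives $\hat L(\tbg^{[\tau]}_i(\bx))\ge\hat L(\tfrac12)$, and the fixed budget $\sum_j\tbg^{[\tau]}_j(\bx)=\lceil\tau n\rceil$ yields the same Markov-type counting bound on how many coordinates can exceed $\hat L(\tfrac12)$. The only cosmetic difference is that the paper proves this for a general monotone rank-based elementwise map and then specializes (with $\sum_k\bg^{[\tau]}_{[k]}=\lceil\tau n\rceil$ and $\bg^{[\tau]}_{[T]}=1$), whereas you argue the Sparsified SmoothGrad case directly; your explicit handling of ties and of the ceiling convention is a welcome addition.
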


For instance, if $\rho \ll \sigma$ and for sufficiently large number of smoothing perturbations (i.e. $q \to \infty$), we have $ \hat{L}(1/2) \to 1/2$. If we set $\tau = K/(2n)$, then for indices whose median ranks are less than or equal to $K/2$, their certified ranks will be less than or equal to $K$. That is, even after adversarially perturbing the input, they will certifiably remain among the top $K$ elements of the Sparsified SmoothGrad saliency map. We present a more general form of this result in the appendix.

\begin{figure}[ht!]
    \centering
    \begin{minipage}{0.48\textwidth}
    \centering
    \includegraphics[trim={.5cm 0 1cm 1.3cm},clip,width=\textwidth]{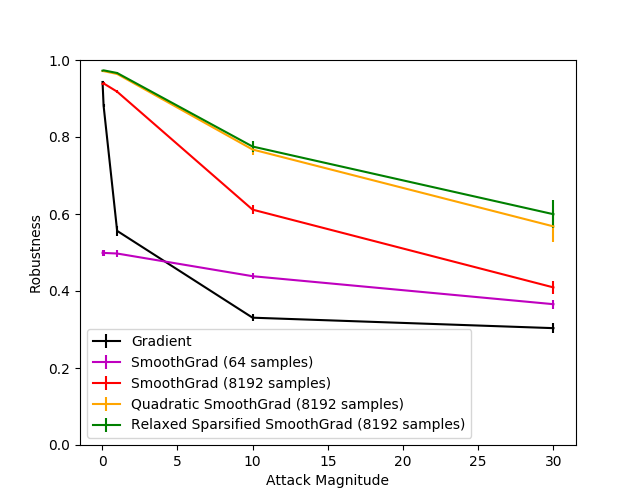}
    \caption{Empirical robustness of variants of SmoothGrad to adversarial attack, tested on CIFAR-10 with ResNet-18. The attack magnitude is in units of standard deviations of pixel intensity. Robustness is measured as
        $R(\bx,\tbx,K)/K$ with $K=n/4$.}
    \label{empirical}
    \end{minipage}\hfill
    \begin{minipage}{0.48\textwidth}
        \centering
    \includegraphics[width=\textwidth]{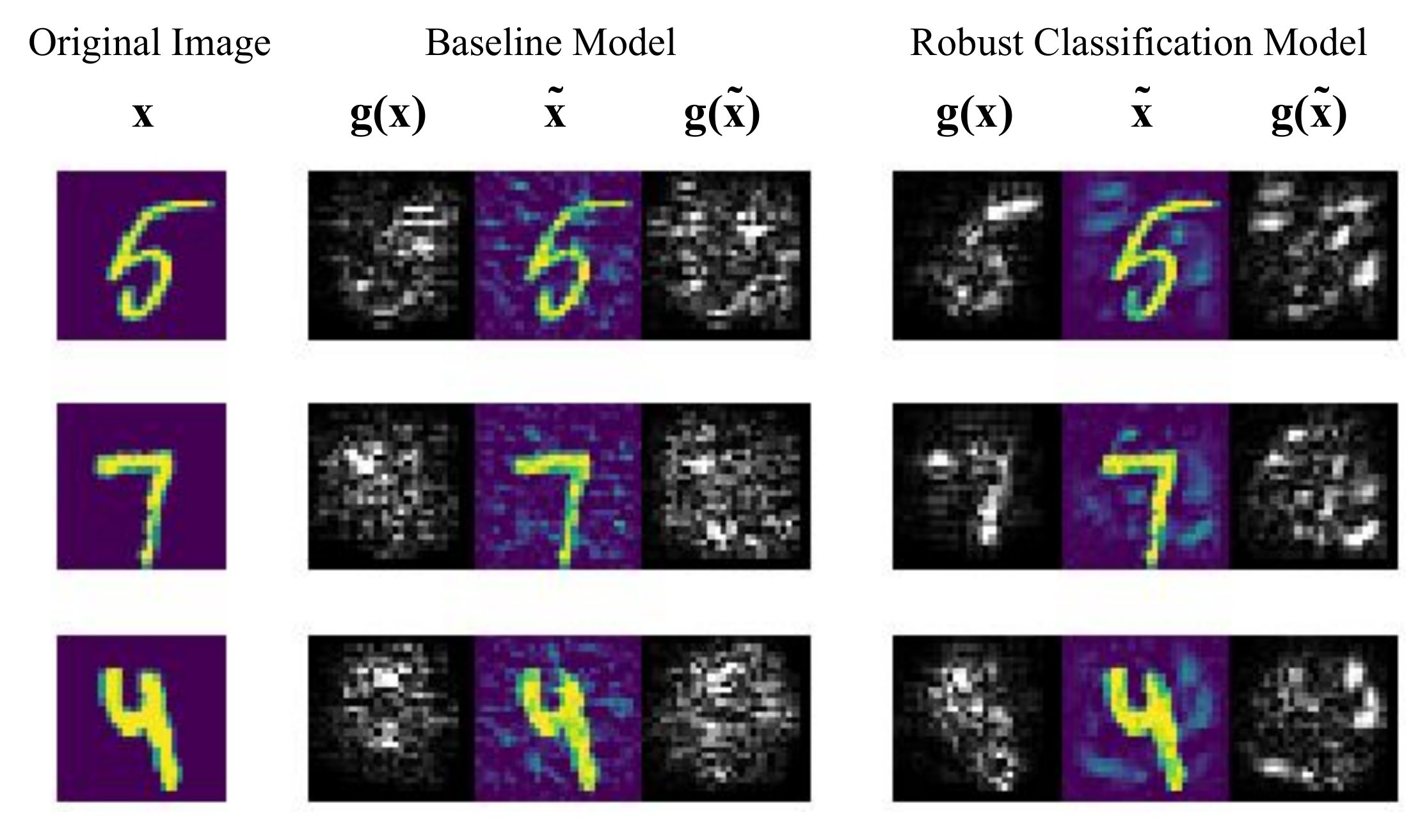}
    \caption{Adversarial training for robust classification is {\it not} effective at producing robust interpretations. We attack the interpretation (using an $L_2$ attack magnitude $\rho=10$) of a baseline CNN and a CNN trained for robust classification. The classification-robust model has an average robustness of $0.5853$, compared to a baseline of $0.5727$, over the MNIST test set. Robustness here is measured as  $R(\bx,\tbx,K)/K$ with $K=n/4$.}
    \label{fig:advtrainclass}
    \end{minipage}
\end{figure}

\section{Empirical Results}\label{sec:experiments-smooth-methods}
To test the empirical robustness of Sparsified SmoothGrad, we used an $L_2$ attack on $R(\bx,\,K)$ adapted from the $L_\infty$ attack defined by \citet{ghorbani}; see the appendix for details of our proposed attack. We test Relaxed Sparsified SmoothGrad  $(\gamma=0.01, \tau=0.1)$, rather than Sparsified SmoothGrad because our attack is gradient-based and Sparsified SmoothGrad has no defined gradients. We tested on ResNet-18 with CIFAR-10 with the attacker using a separately-trained, fully differentiable version of ResNet-18, with SoftPlus activations in place of ReLU. We present our results in Figure \ref{empirical}. We observe that our method is significantly more robust than the SmoothGrad method while its robustness is on par with the Quadratic SmoothGrad method with the same number of smoothing perturbations. We note that our robustness certificate appears to be loose for the large perturbation magnitudes used in these experiments; see the appendix for a direct comparison.

 The attack introduced by \citet{ghorbani} (and its $L_2$ extension that we introduced) does not change the the original classification of the image, so protecting against adversarial attacks on classification is an insufficient defence. To demonstrate this, we compare a model trained adversarially for \textit{classification} with a baseline model. Specifically, we adversarially train a simple CNN classification model on MNIST (see appendix for model architecture) using a state-of-the-art $L_2$ adversarial training procedure provided by \citet{rony2019decoupling}. We used the code available for this adversarial training procedure directly. While, as predicted by \citet{tsipras2018there}, the resulting model produced high-quality saliency maps, the model was not significantly more robust to adversarial perturbation tailored for the interpretation than a naively trained classifier (Figure \ref{fig:advtrainclass}, top-$K$ overlap of 58.5$\%$ with adversarial training for classification vs. $57.3\%$ on the baseline). This highlights that adversarial training for classification does {\it not} suffice to provide adversarial robustness for interpretation. In appendix \ref{sec:adv-train}, we propose an adversarial training approach which successfully defends interpretations against adversarial attack (top-$K$ overlap greater than 80$\%$ under the same attack) while also producing high-quality saliency maps. However, unlike the proposed Sparsified SmoothGrad, this approach is not certifiably robust.\\
 


\section{Conclusion}
In this work, we studied the robustness of deep learning interpretation against adversarial attacks. We introduced a sparsified variant of the popular SmoothGrad method which computes the average saliency maps over random perturbations of the input. By establishing an easy-to-compute robustness certificate for the interpretation problem, we showed that the proposed Sparsified SmoothGrad is certifiably robust to adversarial attacks while producing high-quality saliency maps. We provided experiments on ImageNet samples validating our theory.

\bibliography{robust-interpretation}
\bibliographystyle{plainnat}

\renewcommand{\algorithmicrequire}{\textbf{Input:}}
\renewcommand{\algorithmicensure}{\textbf{Output:}}

\newpage
\appendix 
\section{Proofs}

\begin{appendix_theorem}
Let $\bh :\mathbb{R}^n \rightarrow \left[0,1\right]^n$ be a bounded, real-valued function, $\sigma^{2} \in \mathbb{R} $ be the smoothing variance for $\bh$, then  $\forall\ i,j \in [n],\ \bx,\ \tbx \in\mathbb{R}^n$ \text{ where } $\tbx-\bx = \delta$ such that $\|\delta\|_2 \leq \rho$:
$$
\Phi\left(\Phi^{-1}\left(\barbh_i(\bx) \right)- \frac{2\rho}{\sigma}\right)\ge \barbh_j(\bx) \Rightarrow \barbh_i(\tbx)  \ge  \barbh_j(\tbx)
$$
where $\Phi$ denotes the cdf function for the standard normal distribution and $\Phi^{-1}$ is its inverse.

\end{appendix_theorem}
We will prove this by first proving a more general lemma\footnote{We note that \citet{salman2019provably} has recently and independently published a different proof of this lemma.}:
\begin{lemma} \label{lipschitz-lemma}
For any bounded function  $h :\mathbb{R} \rightarrow \left[0,1\right]$ and smoothing variance  \ $\sigma^{2} \in \mathbb{R} $,  $\Phi^{-1}(\barh(\bx))$ is Lipschitz-continuous with respect to $\bx$,  with Lipschitz constant $\sigma^{-1}$.
\end{lemma}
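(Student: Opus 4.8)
The plan is to exploit the fact that Gaussian smoothing is a convolution: writing $\varphi_\sigma$ for the density of $\mathcal N(0,\sigma^2)$, we have $\barh = h \ast \varphi_\sigma$, so $\barh$ is automatically $C^\infty$ even though $h$ is merely bounded and measurable. Since $\varphi_\sigma>0$ everywhere, $\barh(x)\in(0,1)$ for all $x$ unless $h$ equals $0$ or $1$ almost everywhere (in which case $\barh$ is constant and $F:=\Phi^{-1}\circ\barh$ is trivially Lipschitz in the extended reals); so we may assume $\barh(x)\in(0,1)$. Because $\Phi^{-1}$ is smooth on $(0,1)$, the composite $F(x)=\Phi^{-1}(\barh(x))$ is then $C^1$ on all of $\mathbb R$, and it suffices to prove the pointwise bound $|F'(x)|\le\sigma^{-1}$, from which the Lipschitz estimate with constant $\sigma^{-1}$ follows by the mean value theorem.

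Second, I would compute $F'$ explicitly. By the chain rule $F'(x)=\barh'(x)/\varphi(\Phi^{-1}(\barh(x)))$, where $\varphi:=\Phi'$ is the standard normal density. Differentiating $\barh(x)=\int h(u)\,\varphi_\sigma(u-x)\,du$ under the integral sign (licensed by dominated convergence, since $h$ is bounded and the Gaussian has all moments) and using $\varphi_\sigma'(t)=-(t/\sigma^2)\varphi_\sigma(t)$ yields the clean identity $\barh'(x)=\sigma^{-2}\,\E[\,\epsilon\,h(x+\epsilon)\,]$ with $\epsilon\sim\mathcal N(0,\sigma^2)$. The target $|F'(x)|\le\sigma^{-1}$ is therefore equivalent to the inequality $|\E[\epsilon\,h(x+\epsilon)]|\le\sigma\,\varphi(\Phi^{-1}(\barh(x)))$.

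The crux—and the step I expect to be the main obstacle—is establishing this last inequality. I would phrase it as a variational (Neyman--Pearson / bathtub) problem: among all measurable $g:\mathbb R\to[0,1]$ with prescribed mean $\int g\,\varphi_\sigma = p:=\barh(x)$, maximize the linear functional $\int \epsilon\,g(\epsilon)\,\varphi_\sigma(\epsilon)\,d\epsilon$. Since each unit of mass is weighted by $\epsilon$, a rearrangement argument shows the extremizer is the half-line indicator $g^\star=\mathbf 1\{\epsilon\ge c\}$ with $c$ determined by $\mathbb P(\epsilon\ge c)=p$, i.e. $c=-\sigma\,\Phi^{-1}(p)$. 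A direct computation using $\int_c^\infty\epsilon\,\varphi_\sigma(\epsilon)\,d\epsilon=\sigma^2\varphi_\sigma(c)$ and $\varphi_\sigma(c)=\sigma^{-1}\varphi(\Phi^{-1}(p))$ gives the maximal value exactly $\sigma\,\varphi(\Phi^{-1}(p))$; the matching lower bound follows by symmetry, replacing $g$ by $1-g$ and using $\varphi(\Phi^{-1}(1-p))=\varphi(\Phi^{-1}(p))$. Substituting back yields $|F'(x)|\le\sigma^{-1}$ pointwise, completing the proof. The delicate points are the rigorous justification of the bathtub extremizer (an infinite-dimensional linear program with a box constraint and one equality constraint) and the differentiation under the integral.

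Finally, I would remark that the $n$-dimensional version required by the Theorem reduces to this one-dimensional lemma: restricting $\barbh_i$ to an arbitrary line and integrating out the orthogonal Gaussian coordinates leaves a one-dimensional smoothing of a $[0,1]$-valued profile, to which the lemma applies verbatim, giving the stated implication $\Phi(\Phi^{-1}(\barbh_i(\bx))-2\rho/\sigma)\ge\barbh_j(\bx)\Rightarrow\barbh_i(\tbx)\ge\barbh_j(\tbx)$ via the $\sigma^{-1}$-Lipschitz bound applied over distance $\rho$ to each of the two coordinates.
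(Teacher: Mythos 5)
Your proposal is correct, but it takes a genuinely different route from the paper's proof. The paper never differentiates anything: it lifts $h$ to a randomized binary function $H(\bx)\sim\operatorname{Bern}(h(\bx))$ so that $\E[H(\bx+\epsilon)]=\barh(\bx)$, and then invokes Lemma 4 of Cohen et al.\ (the Neyman--Pearson statement for two shifted Gaussians) with the half-spaces $S_{\pm}=\{\bz:\delta^{T}(\bz-\bx)\lessgtr\pm\beta\}$ to sandwich $\barh(\bx+\delta)$ between $\Phi\bigl(\Phi^{-1}(\barh(\bx))\mp\|\delta\|_2/\sigma\bigr)$ directly --- a global two-point comparison that works in $\mathbb{R}^{n}$ with no regularity considerations. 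You instead prove the infinitesimal statement $|(\Phi^{-1}\circ\barh)'|\le\sigma^{-1}$ by differentiating under the integral and solving the extremal problem $\sup\{\E[\epsilon g(\epsilon)]:0\le g\le 1,\ \E[g(\epsilon)]=p\}$, whose bathtub solution $g^{\star}=\mathbf{1}\{\epsilon\ge c\}$ is the same Neyman--Pearson extremal set in disguise; the mean value theorem and the restriction-to-a-line reduction then recover the $n$-dimensional claim. All the steps you flag as delicate are in fact routine: the rearrangement inequality follows from $\int(\epsilon-c)(g^{\star}-g)\varphi_{\sigma}\,d\epsilon\ge 0$ pointwise, and dominated convergence licenses the differentiation since $h$ is bounded. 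Your route is essentially the alternative proof of Salman et al.\ that the paper's own footnote acknowledges; it buys a strictly stronger local statement (a pointwise gradient bound, with equality characterizing half-space indicators) at the cost of the smoothness bookkeeping and the one-dimensional reduction, whereas the paper's argument is shorter because it reuses Cohen's lemma as a black box and needs only the Bernoulli randomization trick to apply a $\{0,1\}$-valued result to a $[0,1]$-valued function.
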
 
\begin{proof}
By the definition of Lipschitz continuity, we must show that $\forall \delta \in \mathbb{R}^n$,
\begin{equation} \label{lipschitz-eq}
 \Phi^{-1}(\barh(\bx)) - \frac{\|\delta\|_2}{\sigma} \leq \Phi^{-1}(\barh(\bx + \delta))  \leq \Phi^{-1}(\barh(\bx)) + \frac{\|\delta\|_2}{\sigma}
\end{equation}
We first define a new, randomized function $H :\mathbb{R} \rightarrow \{0,1\}$,
 \[
 H(\bx) \sim \operatorname{Bern} \left(h(\bx)\right)
 \]
 Then $\forall\ \bx \in \mathbb{R}^n$:
 \begin{equation}\label{eq:expectation_eq}
 \E\left[ H(\bx+ \epsilon)\right] = \E_{\epsilon}\left[\E_{H}\left[ H(\bx+ \epsilon)\right]\right] =\E_\epsilon\left[ h(\bx+ \epsilon)\right]= \barh(\bx)
 \end{equation}  
Now, we apply the following Lemma (Lemma 4 from \citet{cohen}):
\begin{lemma_appendix}[Lemma 4 from \citep{cohen}]\label{np-gaussian}
Let $X \sim \mathcal{N}(\bx,\ \sigma^2 I)$ and $Y \sim \mathcal{N}(\bx+\delta,\ \sigma^2 I)$.
Let $f: \mathbb{R}^n \to \{0, 1\}$ be any deterministic or random function, Then:
\begin{enumerate}
\item If $S = \left \{z \in \mathbb{R}^n: \delta^T z \le \beta \right \}$ for some $\beta$ and  $\Pr(f(X) = 1) \ge \Pr(X \in S)$, then $\Pr(f(Y) = 1) \ge \Pr(Y \in S)$
\item If $S = \left \{z \in \mathbb{R}^n: \delta^T z \ge \beta \right \}$ for some $\beta$ and $\Pr(f(X) = 1) \le \Pr(X \in S)$, then $\Pr(f(Y) = 1) \le \Pr(Y \in S)$
\end{enumerate}
\end{lemma_appendix}
Using the same technique as used in the proof of Theorem 1 in \citet{cohen}, we fix $\bx, \delta$ and define,
\begin{align*}
\beta &= \sigma \|\delta\|_2 \Phi^{-1}(\E\left[ H(\bx+ \epsilon)\right])\\
\end{align*}
Also define the half-spaces:
\begin{align*}
S_{-}= \{\bz:  \delta^T \bz  \le \beta  +  \delta^T\bx \}&= \{z:  \delta^T (\bz - \bx) \le \beta \} \\
S_{+}= \{\bz:  \delta^T \bz  \ge -\beta+  \delta^T\bx \}&= \{z:  \delta^T (\bz - \bx) \ge -\beta \} \\
\end{align*}
Applying algebra from the proof of Theorem 1 in \citet{cohen}, we have,
\begin{align}
&\Pr(X \in S_{-}) = \Phi\bigg( \frac{\beta}{\sigma \|\delta\|_2}\bigg)= \E\left[ H(\bx+ \epsilon)\right]\label{eq:x_si_eq}\\
&\Pr(X \in S_{+}) = 1 - \Phi\bigg( \frac{-\beta}{\sigma \|\delta\|_2}\bigg) =  1 - \left(1-\Phi\bigg( \frac{\beta}{\sigma \|\delta\|_2}\bigg)\right)= \E\left[ H(\bx+ \epsilon)\right]\label{eq:x_sj_eq}\\
&\Pr(Y \in S_{-}) = \Phi\bigg( \frac{\beta}{\sigma \|\delta\|_2} - \frac{\|\delta\|_2}{\sigma}\bigg)= \Phi\bigg( \Phi^{-1}(\E\left[ H(\bx+ \epsilon)\right])- \frac{\|\delta\|_2}{\sigma}\bigg)\label{eq:y_si_eq} \\
&\Pr(Y \in S_{+}) =  \Phi\bigg( \frac{-(-\beta)}{\sigma \|\delta\|_2} + \frac{\|\delta\|_2}{\sigma}\bigg)= \Phi\bigg( \Phi^{-1}(\E\left[ H(\bx+ \epsilon)\right])+ \frac{\|\delta\|_2}{\sigma}\bigg)\label{eq:y_sj_eq}
\end{align}

Using \eqref{eq:x_si_eq} \[
\Pr(H(X) = 1) = \E \left[ H(X) \right]= \E \left[ H(\bx + \epsilon) \right] \geq \Pr(X \in S_{-})
\]
 Applying Statement 1 of Cohen's lemma, using $f =H$ and $S = S_{-}$:
 \begin{align}
 \E\left[ H(\bx+ \delta + \epsilon)\right] = \Pr(H(\bx+ \delta + \epsilon) = 1)  = \Pr(H(Y) = 1) \geq \Pr(Y \in S_{-})\label{eq:Si}
 \end{align}
 
 Using \eqref{eq:x_sj_eq}, \[
\Pr(H(X) = 1) = \E \left[ H(X) \right]= \E \left[ H(\bx + \epsilon) \right] \leq \Pr(X \in S_{+})
\]
Applying Statement 2 of Cohen's lemma, using $f =H$ and $S = S_{+}$:
 \begin{align}
 \E\left[ H(\bx+ \delta + \epsilon)\right] = \Pr(H(\bx+ \delta + \epsilon) = 1)  = \Pr(H(Y) = 1) \leq \Pr(Y \in S_{+}) \label{eq:Sj}
 \end{align}
Using \eqref{eq:Si} and \eqref{eq:Sj}:
\[
\Pr(Y \in S_{-}) \leq \E\left[ H(\bx+ \delta + \epsilon)\right]  \leq \Pr(Y \in S_{+}) 
 \]
Then by \eqref{eq:y_si_eq} and \eqref{eq:y_sj_eq}:
\[
\Phi\bigg( \Phi^{-1}(\E\left[ H(\bx+ \epsilon)\right])- \frac{\|\delta\|_2}{\sigma}\bigg) \leq \E\left[ H(\bx+ \delta + \epsilon)\right]  \leq \Phi\bigg( \Phi^{-1}(\E\left[ H(\bx+ \epsilon)\right])+ \frac{\|\delta\|_2}{\sigma}\bigg)
 \]
 Noting that $\Phi^{-1}$ is a monotonically increasing function, we have:
 \[
\Phi^{-1}(\E\left[ H(\bx+ \epsilon)\right])- \frac{\|\delta\|_2}{\sigma}\leq \Phi^{-1}(\E\left[ H(\bx+ \delta + \epsilon)\right] ) \leq  \Phi^{-1}(\E\left[ H(\bx+ \epsilon)\right])+ \frac{\|\delta\|_2}{\sigma}
 \]
Using \eqref{eq:expectation_eq} yields \eqref{lipschitz-eq}, which completes the proof.
 \end{proof}
 We now proceed with the proof of Theorem \ref{mainbound}.
\begin{proof}
Applying Lemma \ref{lipschitz-lemma}  to $\bh_i$ and $\bh_j$ gives (recalling that $\tbx=\delta+\bx$ ):
\begin{equation} \label{hi-bound}
\Phi^{-1}(\barbh_i(\bx))-\frac{\rho}{\sigma} \leq \Phi^{-1}(\barbh_i(\bx))-\frac{\|\delta\|_2}{\sigma} \leq \Phi^{-1}(\barbh_i(\tbx))
\end{equation}
\begin{equation} \label{hj-bound}
 \Phi^{-1}(\barbh_j(\tbx)) \leq \Phi^{-1}(\barbh_j(\bx))+\frac{\|\delta\|_2}{\sigma} \leq  \Phi^{-1}(\barbh_j(\bx))+\frac{\rho}{\sigma}
\end{equation}
Then we have:
\begin{align*}
    & &\Phi\left(\Phi^{-1}\left(\barbh_i(\bx) \right)- \frac{2\rho}{\sigma}\right)\ge \barbh_j(\bx) & \\
    &\implies & \Phi^{-1}\left(\barbh_i(\bx) \right)- \frac{\rho}{\sigma}\ge  \Phi^{-1}\left(\barbh_j(\bx)\right)  +\frac{\rho}{\sigma} & \quad ( \text{by monotonicity of }\Phi^{-1})\\
    &\implies & \Phi^{-1}(\barbh_i(\tbx)) \ge  \Phi^{-1}\left(\barbh_j(\bx)\right)  +\frac{\rho}{\sigma} & \quad ( \text{by \eqref{hi-bound} })\\
    &\implies & \Phi^{-1}(\barbh_i(\tbx)) \ge   \Phi^{-1}(\barbh_j(\tbx)) & \quad ( \text{by \eqref{hj-bound} })\\
    &\implies & \barbh_i(\tbx) \ge  \barbh_j(\tbx) & \quad( \text{by monotonicity of }\Phi)\\
 \end{align*}
which proves the implication.
\end{proof}

\begingroup

\def\thecorollary{\ref{probbound}}
\begin{corollary} 
Let $\bh :\mathbb{R}^n \rightarrow \left[0,1\right]^n$ be a function such that for given values of $q$,\ $\sigma$:
\begin{equation}
\tilde{\bh}(\bx) = \frac{1}{q}\sum_{i=1}^{q} \bh(\bx + \eps_{i}),\quad  \eps_{i} \sim N(0, \sigma^{2}I)
\end{equation}
$\forall\ i,j \in [n],\ \bx,\ \tbx \in\mathbb{R}^n,\ \|\bx-\tbx\|_2 \leq \rho$,\ with probability at least $p$,\\ 
$$\ \hat{L}( \tbh_i(\bx) )\ge \tbh_j(\bx)  \Rightarrow  \barbh_i(\tbx) )\ge \barbh_j(\tbx)$$
\end{corollary}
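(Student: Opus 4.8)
The plan is to reduce this empirical statement to the population statement of Theorem~\ref{mainbound} by controlling the deviation between $\tbh(\bx)$ and $\barbh(\bx)$ uniformly across all coordinates via a concentration inequality. First I would observe that for each fixed index $k$, the empirical average $\tbh_k(\bx) = \frac{1}{q}\sum_{l=1}^{q} \bh_k(\bx + \eps_l)$ is a mean of $q$ independent random variables, each taking values in $[0,1]$ (since $\bh$ maps into $[0,1]^n$) and each having expectation $\barbh_k(\bx)$. Hoeffding's inequality then gives, for any $t>0$, the one-sided tail bounds $\Pr(\barbh_k(\bx) < \tbh_k(\bx) - t) \le \exp(-2qt^2)$ and $\Pr(\barbh_k(\bx) > \tbh_k(\bx) + t) \le \exp(-2qt^2)$.

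Next I would apply a union bound over the $2n$ one-sided failure events, namely a lower-tail event and an upper-tail event for each of the $n$ coordinates. Taking $t = c$ with $c = \sqrt{\ln(2n(1-p)^{-1})/(2q)}$, the total failure probability is at most $2n\exp(-2qc^2) = 1-p$, so with probability at least $p$ we have simultaneously $\tbh_k(\bx) - c \le \barbh_k(\bx) \le \tbh_k(\bx) + c$ for every $k \in [n]$. This is precisely the step that pins down the exact form of $c$ appearing in the definition of the empirical floor function $\hat{L}$.

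Conditioned on this high-probability event, I would then show that the empirical hypothesis implies the population hypothesis of Theorem~\ref{mainbound}. Starting from $\hat{L}(\tbh_i(\bx)) \ge \tbh_j(\bx)$ and using the definition $\hat{L}(z) = \Phi(\Phi^{-1}(z-c) - 2\rho/\sigma) - c$, I would rearrange to $\Phi(\Phi^{-1}(\tbh_i(\bx) - c) - 2\rho/\sigma) \ge \tbh_j(\bx) + c$. The lower bound $\barbh_i(\bx) \ge \tbh_i(\bx) - c$ together with the monotonicity of $\Phi^{-1}$ and $\Phi$ gives $L(\barbh_i(\bx)) = \Phi(\Phi^{-1}(\barbh_i(\bx)) - 2\rho/\sigma) \ge \Phi(\Phi^{-1}(\tbh_i(\bx) - c) - 2\rho/\sigma) \ge \tbh_j(\bx) + c$, and the upper bound $\barbh_j(\bx) \le \tbh_j(\bx) + c$ then yields $L(\barbh_i(\bx)) \ge \barbh_j(\bx)$. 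Applying Theorem~\ref{mainbound} with the distortion constraint $\|\bx - \tbx\|_2 \le \rho$ delivers $\barbh_i(\tbx) \ge \barbh_j(\tbx)$, which is the desired conclusion.

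The main obstacle I anticipate is not any individual inequality but ensuring the concentration bound is genuinely uniform over coordinates and correctly two-sided: the lower bound must be invoked on coordinate $i$ while the upper bound is invoked on coordinate $j$, and both must hold on the \emph{same} high-probability event so that the subsequent chain of implications is purely deterministic. A minor technical point to verify is the well-definedness of $\Phi^{-1}(\tbh_i(\bx) - c)$ when $\tbh_i(\bx) - c \le 0$; in that regime $\hat{L}(\tbh_i(\bx)) < 0 \le \tbh_j(\bx)$, so the hypothesis fails and the implication holds vacuously, meaning these boundary cases need no separate treatment.
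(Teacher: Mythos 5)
Your proposal is correct and follows essentially the same route as the paper's own proof: a per-coordinate Hoeffding bound, a union bound fixing $c=\sqrt{\ln(2n(1-p)^{-1})/(2q)}$, and then a deterministic monotonicity argument reducing the empirical hypothesis $\hat{L}(\tbh_i(\bx))\ge\tbh_j(\bx)$ to the population hypothesis $L(\barbh_i(\bx))\ge\barbh_j(\bx)$ of Theorem~\ref{mainbound}. Your explicit handling of the boundary case $\tbh_i(\bx)-c\le 0$ is a small point the paper leaves implicit, but it does not change the argument.
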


\addtocounter{corollary}{-1}
\endgroup
\begin{proof}
By Hoeffding's Inequality, for any $c > 0$, $\forall \ i:$
\begin{equation}
    \Pr\left[|\tbh_i(\bx) - \barbh_i(\bx)| \geq c \right] \leq 2e^{-2q c^2}
\end{equation}
Then:
\begin{equation}
    \Pr\left[ \bigcup_i \bigg(|\tbh_i(\bx) - \barbh_i(\bx)| \geq c\bigg)\right] \leq 2ne^{-2qc^2}
\end{equation}
Since we are free to choose c, we define $c$ such that $1-p = 2ne^{-2qc^2}$, then:
\begin{equation}
c = \sqrt{\frac{ln(2n(1-p)^{-1})}{2q}}
\end{equation}
\begin{align*}
    &\Pr\left[ \bigcup_i \bigg(|\tbh_i(\bx) - \barbh_i(\bx)| \geq c \bigg)\right] \leq 2ne^{-2qc^2}= 1 - p\\
    &\implies 1 - \Pr\left[ \bigcup_i \bigg(|\tbh_i(\bx) - \barbh_i(\bx)| \geq c \bigg)\right] \geq p\\
    &\implies \Pr\left[ \bigcap_i \bigg(|\tbh_i(\bx) - \barbh_i(\bx)| < c \bigg)\right] \geq p
\end{align*}
Then with probability at least $p$:
\begin{equation}\begin{split}
 \tbh_i(\bx) - c < \barbh_i(\bx)\\
 \tbh_j(\bx) + c > \barbh_j(\bx)\\
 \end{split}
\end{equation}
So:
\begin{equation}
\Phi\left(\Phi^{-1}\left(\tbh_i(\bx) -c)  \right)- \frac{2\rho}{\sigma}\right) \geq \tbh_j(\bx) +c \implies  \Phi\left(\Phi^{-1}\left(\barbh_i(\bx)  \right)- \frac{2\rho}{\sigma}\right) \geq \barbh_j(\bx) 
\end{equation}
The result directly follows from Theorem \ref{mainbound}.
\end{proof}

\begingroup
\def\thecorollary{\ref{topkcert}}
\begin{appendix_theorem}
$\forall\ \bx, \tbx \in\mathbb{R}^n,\ \|\bx-\tbx\|_2 \leq \rho, \ \sigma \in \mathbb{R},\ q \in \mathbb{N}$, define $ R^{\text{cert}}(\bx, K)$ as the largest $i\leq K$ such that $\hat{L}(\tbh_{[i]}(\bx)) \geq \tbh_{[2K-i]}(\bx)$. Then, with probability at least $p$,
\begin{equation}
R^{\text{cert}}(\bx,\ K) \leq R(\bx,\ \tbx,\ K).    
\end{equation}
where $ R(\bx,\ \tbx,\ K)$ denotes the top-$K$ overlap between $\tbh(\bx)$ and $\barbh(\tbx)$.
\end{appendix_theorem}
\endgroup
\begin{proof}
Note that the proof of Corollary \ref{probbound} guarantees that with probability at least $p$, \textit{all} estimates $\tbh(\bx)$ are within the approximation bound $c$ of  $\barbh(\bx)$. So we can assume that Corollary \ref{probbound} will apply simultaneously to all pairs of indices $i,j$, with probability $p$.\\
We proceed to prove by contradiction. 
$$ \text{Let } i =  R^{\text{cert}}(\bx, K)$$ 
$$\implies \hat{L}(\tbh_{[i]}(\bx)) \geq \tbh_{[2K-i]}(\bx), $$
Suppose there exists $\tbx$ such that:
$$ R(\bx,\ \tbx,\ K) < i,$$ 
Since $\hat{L}$ is a monotonically increasing function, $$\hat{L}(\tbh_{[i]}(\bx)) \geq \tbh_{[2K-i]}(\bx)$$ 
$$\implies \hat{L}(\tbh_{[i']}(\bx)) \geq \tbh_{[j']}(\bx),\quad \forall\ i' \leq i,\ j' \geq 2K-i,$$
and therefore by  Corollary \ref{probbound}:
\begin{equation} \label{eq:cor2ineq}
    \forall\ m,n\quad \rank(\tbh(\bx), m) \leq i,\ \rank(\tbh(\bx), n) \geq 2K-i \implies \barbh_{m}(\tbx)) \geq \barbh_{n}(\tbx)
\end{equation}
Let $X$ be the set of indices in the top $K$ elements in $\tbh(\bx)$, and $\tilde{X}$ be the set of indices in the top $K$ elements in $\barbh(\tbx)$. \\
By assumption, $X$ and $\tilde{X}$ share fewer than $i$ elements, so there will be at least $K-i+1$ elements in $\tilde{X}$ which are not in $X$. \\
All of these elements have rank at least $K+1$ in $\tbh(\bx)$.\\ Thus by pigeonhole principle, there is some index $l \in  \tilde{X}- X$, such that $\rank(\tbh(\bx),\ l) \geq K + K-i+1 = 2K-i+1 \geq  2K-i$. \\
Thus by Equation \eqref{eq:cor2ineq}, 
\begin{equation} \label{eq:cor2ineq2}
    \forall m, \text{ where } \rank(\tbh(\bx), m) \leq i,  \quad \barbh_{m}(\tbx) \geq \barbh_{l}(\tbx)
\end{equation}
Hence, there are $i$ such elements where $\rank(\tbh(\bx), m) \leq i$: these elements are clearly in $X$. \\
Because $l \in \tilde{X}$, Equation \eqref{eq:cor2ineq2} implies that these elements are all also in $\tilde{X}$. Thus  $X$ and $\tilde{X}$ share at least $i$ elements,which contradicts the premise.\\
(In this proof we have implicitly assumed that the top $K$ elements of a vector can contain more than $K$ elements, if ties occur, but that $\rank$ is assigned arbitrarily in cases of ties. In practice, ties in smoothed scores will be very unlikely.)
\end{proof}
\subsection{General Form and Proof of Theorem \ref{ssgrankbound} }

We note that Theorem \ref{ssgrankbound} can be used to derive a more general bound for any saliency map method that for an input $\bx$, first maps $\bg(\bx)$ to an elementwise function that only depends on the rank of the current element in $\bg(\bx)$ and not on the individual value of the element. We denote the composition of the gradient function and this elementwise function as $\bg^{[\rank]}$. 
The only properties that the function must satisfy is that it must be monotonically decreasing and non-negative. Thus, we have the following statement:

\begin{appendix_theorem}
Let $T$ be the threshold value and let $U$ be the set of $q$ random perturbations for a given input $\bx$ using the smoothing variance $\sigma^{2}$ and let $p$ be the probability bound. If $i$ is an element index such that:
\begin{equation}
    \mathop{\text{Median}}_{\eps\in U} \left[\rank(\bg(\bx + \eps), i)\right] \leq T
\end{equation}
Then:
\begin{equation} 
    \rank^{\text{cert}}(\bx,\ i) \leq 
    \frac{\sum_{j=1}^n \bg^{[\rank]}_{[j]}(\bx)}{\hat{L}\bigg(\frac{\bg^{[\rank]}_{[T]}(\bx)}{2}\bigg)}
\end{equation}
Furthermore: 
\begin{equation}
\sum_{j=1}^n \bg^{[\rank]}_{[j]}(\bx),\  \hat{L}\bigg(\frac{\bg^{[\rank]}_{[T]}(\bx)}{2}\bigg) \text{ are both independent of }\bx. \text{ Thus RHS is a constant.}
\end{equation}
\end{appendix_theorem}
\begin{proof}\label{proof:genericrankbound}
 Let the elementwise function be $f: \mathbb{N} \to \mathbb{R}^{+}$, i.e $f$ takes the rank of the element as the input and outputs a real number. Furthermore, we assume that $f$ is a non-negative monotonically decreasing function. 
 Thus $\bg^{[\rank]}_i(\bx) = f(\rank(\bg(\bx), i))$.\\
 We use $f(i)$ to denote the constant value that $f$ maps elements of rank $i$ to. \\
 Note that $\bg^{[\rank]}_{\left[i\right]}(\bx)$ is the $i^{th}$ largest element of $\bg^{[\rank]}(\bx)$. \\
 Since $f$ is a monotonically decreasing function:
 $$\bg^{[\rank]}_{\left[i\right]}(\bx) = f(i) \quad \forall\ i \in [n]$$\\
 Thus $\bg^{[\rank]}_{\left[i\right]}(\bx)$ is independent of $\bx$, we simply use $\bg^{[\rank]}_{[i]}(\cdot)$ to denote $f(i)$, i.e:
 $$\bg^{[\rank]}_{[i]}(\cdot) = f(i)\quad \ \forall\ i \in [n]$$
 Because $\mathop{\text{Median}}_{\eps\in U} \left[\rank(\bg(\bx + \eps),\ i)\right] \leq T$, for at least half of sampling instances $\eps$ in $U$, $\rank(\bg(\bx + \eps), i) \leq T$. \\
  So in these instances $\bg^{[\rank]}_i(\bx + \eps) \geq f(T)$,\\
  The remaining half or fewer elements are mapped to other nonnegative values. \\
  Thus the sample mean: $$\tbg^{\left[\rank\right]}_i(\bx) = \frac{1}{q}\sum_{\eps \in U} \bg^{\left[\rank\right]}_i(\bx + \eps) \geq \bg^{[\rank]}_{[T]}(\cdot)/2$$
  Using Corollary \ref{probbound}, $\barbg^{\left[rank\right]}_i(\bx)$ is certifiably as large as all elements with indices j such that: 
  $$ \hat{L}(\bg^{[\rank]}_{[T]}(\cdot)/2) \ge \tbg^{\left[rank\right]}_j(\bx) \quad $$ . \\
  Now we will find an upper bound on the number of elements with indices j such that: $$\tbg^{\left[rank\right]}_j(\bx) > \hat{L}(\bg^{[\rank]}_{[T]}(\cdot)/2)$$\\
  Because all the ranks from $1$ to $n$ will occur in every sample in U, we have:
  $$ \forall\ \eps\in U,\quad \sum_{k=1}^{n} \bg^{\left[\rank\right]}_k(\bx + \eps) = \sum_{k=1}^n \bg^{[\rank]}_{[k]}(\cdot) $$
  $$\implies \sum_{k=1}^{n} \tbg^{\left[\rank\right]}_k(\bx) = \sum_{k=1}^{n} \frac{1}{q} \sum_{\eps \in U} \bg^{\left[\rank\right]}_i(\bx + \eps) = \sum_{k=1}^n \bg^{[\rank]}_{[k]}(\cdot) $$
  Thus strictly fewer than 
  $\sum_{k=1}^n \bg^{[\rank]}_{[k]}(\cdot)/\hat{L}\bigg(\bg^{[\rank]}_{[T]}(\cdot)/2 \bigg)$ elements will have mean greater than $\hat{L}\bigg(\bg^{[\rank]}_{[T]}(\cdot)/2\bigg)$. \\
  Hence, $\barbg_i(\bx)$ is certifiably at least as large as $n - \bigg(\sum_{k=1}^n \bg^{[\rank]}_{[k]}(\cdot)/\hat{L}(\bg^{[\rank]}_{[T]}(\cdot)/2)\bigg) +1$ elements, which by the definition of $\rank^{\text{cert}}(\bx,i)$ yields the result. \\
  Theorem \ref{ssgrankbound}  in the main text follows trivially, because in the Sparsified SmoothGrad case, $\sum_{k=1}^n \bg^{[\tau]}_{[k]}(\cdot) = T$, and $\bg^{[\tau]}_{[T]}(\cdot) = 1$. Note that this represents the tightest possible realization of this general theorem.
\end{proof}

\section{Related Works}\label{sec:related-work}

\citet{sundararajan2017axiomatic} defines a baseline, which represents an input absent of information and determines feature importance by accumulating gradient information along the path from the baseline to the original input. \citet{2018arXiv180607538A} builds interpretable neural networks by learning basis concepts that satisfy an interpretability criteria. \citet{adebayo-sanity} proposes methods to assess the quality of saliency maps. Although these methods can produce visually pleasing results, they can be sensitive to noise and adversarial perturbations.

\citet{Szegedy2014IntriguingPO} introduced adversarial attacks for classification in deep learning. That work dealt with $L_2$ attacks, and uses L-BFGS optimization to minimize the norm of the perturbation.
\citet{carliniwagner} provide an $L_2$ attack for classification which is often considered state of the art.

One strategy to make classifiers more robust to adversarial attacks is randomized smoothing. \citet{lecuyer2018certified} use randomized smoothing to develop certifiably robust classifiers in both the $L_1$ and $L_2$ norms. They show that if Gaussian smoothing is applied to class scores, a gap between the highest smoothed class score and the next highest smoothed score implies that the highest smoothed class score will still be highest under all perturbations of some magnitude. This guarantees that the smoothed classifier will be robust under adversarial perturbation. 

\citet{li2018second} and \citet{cohen} consider a related formulation. Cohen gives a bound that is tight in the case of linear classifiers and gives significantly larger certified radii. In their formulation, the unsmoothed classifier $c$ is treated as a black box outputting just a discrete class label. The smoothed classifier outputs the class observed with greatest frequency over noisy samples. \citet{salman2019provably} considers adversarial attacks against smoothed classifiers.

In the last couple of years, several approaches have been proposed to for interpreting neural network outputs. \citet{simonyan2013deep}
 computes the gradient of the class
score with respect to the input. \citet{smoothgrad} computes the average gradient-based importance values generated from several noisy versions of the
input. \citet{sundararajan2017axiomatic} defines a baseline, which represents an input absent of information and determines feature importance by accumulating gradient information along the path from the baseline to the original input. \citet{2018arXiv180607538A} builds interpretable neural networks by learning basis concepts that satisfy an interpretability criteria. \citet{adebayo-sanity} proposes methods to assess the quality of saliency maps. Although these methods can produce visually pleasing results, they can be sensitive to noise and adversarial perturbations
\citep{ghorbani,Kindermans2018TheO}.

As mentioned in Section 1, several approaches have been introduced for interpreting image classification by neural networks \citep{simonyan2013deep, smoothgrad, sundararajan2017axiomatic, shrikumar2017learning}.
\section{$L_2$ Attack on Saliency Maps}\label{sec:l2_attack_sec}

We developed an $L_2$ norm attack on $R^{\text{cert}}$, based on \citet{ghorbani}'s $L_\infty$ attack. Our algorithm is presented as Algorithm \ref{alg:l2attack}. We deviate from \citet{ghorbani}'s attack in the following ways.:
\begin{itemize}
    \item We use gradient descent, rather than gradient sign descent: this is a direct adaptation to the $L_2$ norm.
    \item We initialize learning rate as $\frac{\rho}{\|\nabla D(\bx^0)\|_2}$, and then decrease learning rate with increasing iteration count, proportionately (for the most part) to the reciprocal of the iteration count. These are both standard practices for gradient descent.
    \item We use random initialization and random restarts, also standard optimization practices.
    \item If a gradient descent step would cross a decision boundary, we use backtracking line search to reduce the learning rate until the step stays on the correct-class side. This allows the optimization to get arbitrarily close to decision boundaries without crossing them.
\end{itemize}
\begin{figure}[h!]
\centering
   \begin{subfigure}{0.49\linewidth} \centering
     \includegraphics[scale=0.45]{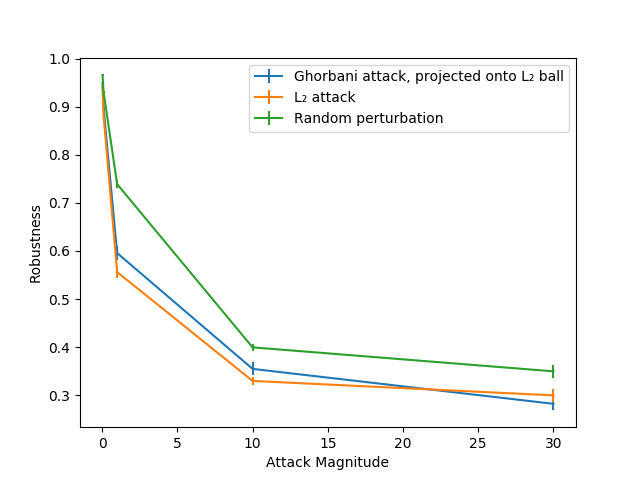}
     \caption{Attacks on  vanilla gradient method.}
   \end{subfigure}
   \begin{subfigure}{0.49\linewidth} \centering
     \includegraphics[scale=0.45]{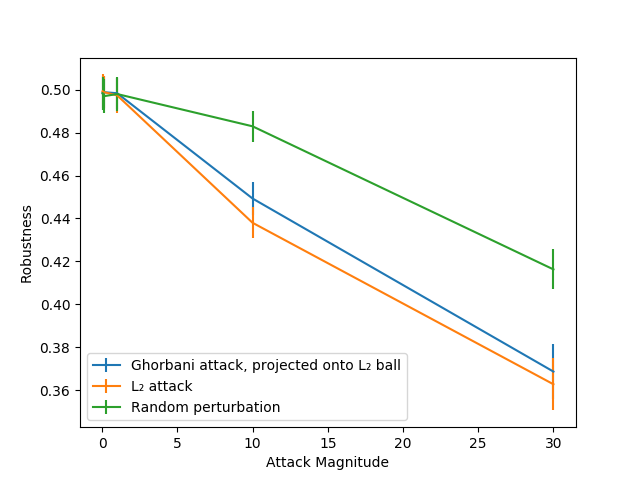}
     \caption{Attacks on SmoothGrad method (q=64).}
   \end{subfigure}
\caption{Comparison of attack methods on images in CIFAR-10. See text of section \ref{sec:l2_attack_sec}.}
\label{fig:compareadvs}
\end{figure}
\begin{algorithm}[h!]
\caption{$L_2$ attack on \textit{top-k overlap}}\label{alg:l2attack}
\begin{algorithmic}[1]
    \Require{$k$, image $\bx$, saliency map function $\bh$, iteration number $P$,  random sampling iteration number  $Q$, $L_2$ perturbation norm constraint $\rho$, classifier $c$, restarts number $T$}
    \Ensure{Adversarial example $\tbx$}
    \State Define $D(\bz) = - \sum_{i, \, \rank(\bh(\bx),i) \leq k} \bh_i(\bz) $
    \For{ $t = 1, ..., T$}
        \Loop
            \State $\delta \leftarrow$ Uniformly random vector on $L_2=\rho$ sphere.
            \State $\bx^0 \leftarrow \bx+\delta$
            \State Clip $\bx^0$ such that it falls within image box constraints.
            \State \algorithmicif{ $c(\bx^0) = c(\bx)$ }\algorithmicthen{ \textbf{break inner loop}}
             \If{ $Q$  total iterations have passed over all $t$ cycles of random sampling }
             \State{ \textbf{break outer loop}}
        \EndIf
        \EndLoop
        \State $\alpha \leftarrow \frac{\rho}{\|\nabla D(\bx^0)\|_2}$
        \For{ $p = 1, ..., P$}
            \Loop
                \State $\bx^p \leftarrow \bx^{p-1}+\alpha\nabla D(\bx^{p-1})$
                \State If necessary, project $\bx^p$ such that $\|\bx^p-\bx\|_2 \leq \rho$
                \State Clip $\bx^p$ such that it falls within image box constraints.
                \If{ $c(\bx^p) = c(\bx)$ }
                    \State \textbf{ break inner loop}
                \Else
                    \State $\alpha \leftarrow \frac{\alpha}{2}$
                \EndIf
            \EndLoop
            \State$\alpha \leftarrow \frac{p\alpha}{p+1}$
        \EndFor
    \State $\tbx^{t} = \arg\max_{\bz \in \{\bx^1,...,\bx^P\}} D(\bz)$
    \EndFor
    \State \algorithmicif{ random sampling failed at every iteration }\algorithmicthen{ \textbf{fail}}
    \State $\tilde{\bx} = \arg\max_{\bz \in \{\tbx^1,...,\tbx^T\}} D(\bz)$
\end{algorithmic}
\end{algorithm}
We measured the effectiveness of our attack $(Q=100,P=20,T=5)$ against a slight modification of \citet{ghorbani}'s attack, in which the image was projected (if necessary) onto the $L_2$ ball at every iteration, and also clipped to fit within image box constraints (this was not mentioned in \citet{ghorbani}'s original algorithm). For this attack, we set the ($L_\infty$) learning rate parameter at $\rho / 500$, and ran for up to 100 iterations.  We also tested against random perturbations. For random perturbations, up to $100$ points were tested until a point in the correct class was identified. We tested these attacks on both ``vanilla gradient'' and SmoothGrad saliency maps. See Figure \ref{fig:compareadvs}. Experimental conditions are as described in Section \ref{sec:expdesc} for experiments on CIFAR-10. In this figure, for each attack magnitude, we discard any image on which any optimization method failed.

\section{Adversarial Training for Robust Saliency Maps}\label{sec:adv-train}

Adversarial training has been used extensively for making neural networks robust against adversarial attacks on classification \citep{Madry2018TowardsDL}. The key idea is to generate adversarial examples for a classification model, and then re-train the model on these adversarial examples.

In this section, we present an adversarial training approach for fortifying deep learning interpretations so that the saliency maps generated by the model (during test time) are robust against adversarial examples. We focus on ``vanilla gradient'' saliency maps, although the technique presented here can potentially be applied to any saliency map method which is differentiable w.r.t. the input. We solve the following optimization problem for the network weights (denoted by $\theta$):
\begin{equation} \label{advloss}
    \min_{\theta}\quad {\mathbb{E}_{(\bx, y) \sim D} \bigg[ \underbrace{\ell_{\text{cls}}(\bx, y)}_{\text{Classification\  loss}} + \underbrace{\lambda \left\|\bg(\bx)- \bg(\tbx)\right\|_2^2}_{\text{Robustness\ loss}}}\bigg],
\end{equation}
where $\tbx$ is an adversarial perturbation for the saliency map generated from $\bx$. To generate $\tbx$, we developed an $L_2$ attack on saliency maps by extending the $L_{\infty}$ attack of \cite{ghorbani} (see the details in Section \ref{sec:l2_attack_sec}). $\ell_{\text{cls}}(\bx, y)$ is the standard cross entropy loss, and $\lambda$ is the regularization parameter  to encourage consistency between saliency maps of the original and adversarially perturbed images.

\begin{figure}[h!]
       \centering
\includegraphics[width=.5\textwidth]{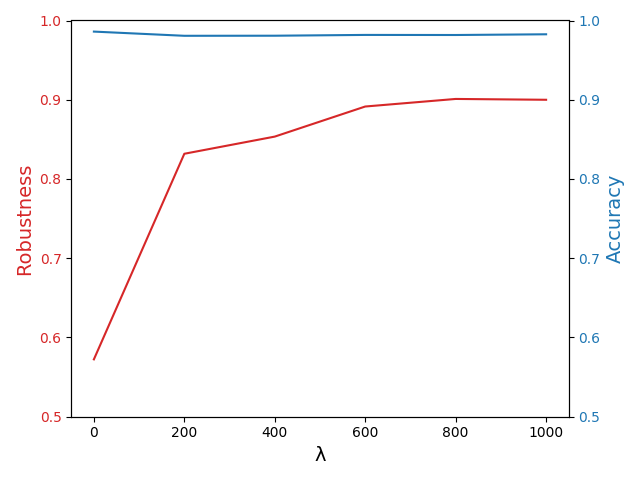}
\caption{Effectiveness of adversarial training on MNIST. Increasing the regularization parameter $\lambda$ in the proposed adversarial training optimization (Equation \ref{advloss}) significantly increases the robustness of gradient-based saliency maps while it has little effect on the classification accuracy. Robustness here is measured as  $R(\bx,\tbx,K)/K$, where $K=n/4$. }.
\label{fig:advtrain}
\end{figure}

We observe that the proposed adversarial training significantly improves the robustness of saliency maps. Aggregate empirical results are presented in Figure \ref{fig:advtrain}, and examples of saliency maps are presented in Figure \ref{fig:advtrainex}. It is notable that the quality of the saliency maps is greatly improved for unperturbed inputs, by adversarial training. We note that this observation is related to the observation made in \citet{tsipras2018there} showing that adversarial training for classification improves the quality of the gradient-based saliency maps. We observe that even for very large value of $\lambda$, only a slight reduction in classification accuracy occurs due to the added regularization term. 


\begin{figure}[h!]
    \centering
    \includegraphics[width=0.70\textwidth]{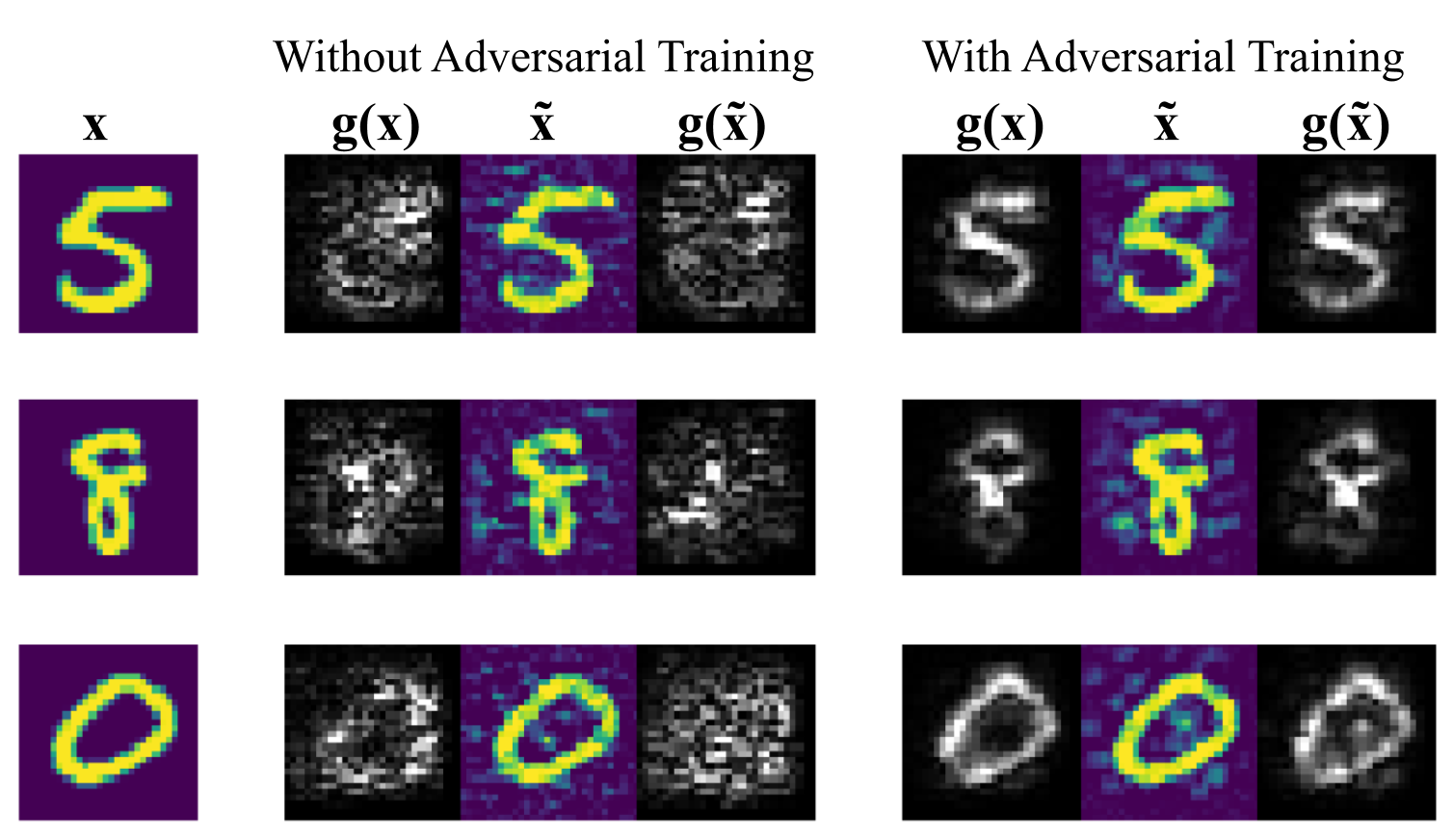}
    \caption{An illustration of the proposed adversarial training to robustify deep learning interpretation on MNIST.  We observe that the proposed adversarial training not only enhances the robustness but it also improves the quality of the gradient-based saliency maps. }
    \label{fig:advtrainex}
\end{figure}
\section{Adversarial Training Architecture Details} \label{sec:advtrainarch}

We use the Adam optimizer and generate new adversarial examples after each batch of training according to the updated model. We use a simple convolutional neural network, with SoftPlus activations to ensure differentiability of the saliency map, on the MNIST data set (Figure \ref{fig:cnn}). Adversarial perturbations of norm up to $\rho = 10$ standard deviations of pixel intensity were used.  The adversarial attack used was the $L_2$ attack described in Algorithm \ref{alg:l2attack}, with $P=15, Q=100, T=3$.  Training was performed for 30 epochs using 48,000 images from the MNIST training set, testing was on the entire MNIST test set. Instances where Algorithm \ref{alg:l2attack} failed were not counted in the averages of saliency map robustness, and were rare. (Highest frequency was for $\lambda=0$, at 0.11\%). We used the implementation of Adam Optimizer provided with PyTorch \citep{paszke2017automatic}, with default training parameters. These are (Table 2):
\begin{table}[H]\centering
\caption{Hyper-parameters used in model training for MNIST experiments.}

\begin{tabular}{|c|c|}
     \hline
     Learning Rate & 0.001\\
     \hline
     $L_2$ regularization parameter & 0\\
          \hline
     $\beta$ & (.9,.999)\\
          \hline
     $\epsilon$ & $10^{-8} $\\
          \hline
     Batch Size & 512 \\
          \hline
\end{tabular}
\end{table}
\begin{figure}[h]
    \centering
    \includegraphics[width=0.8\textwidth]{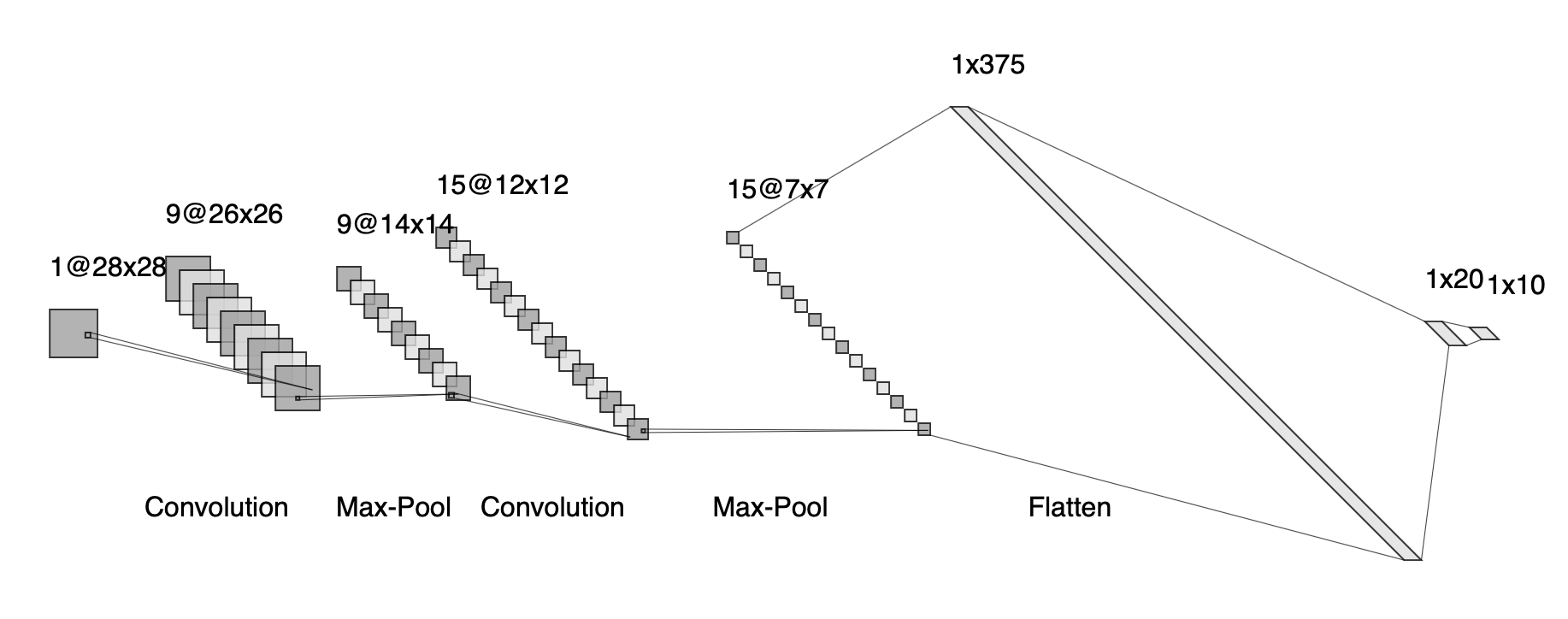}
    \caption{Network architecture used for the MNIST classification. SoftPlus activations are applied after both convolutional layers, and after the first fully connected layer. }
    \label{fig:cnn}
\end{figure}
\section{Description of Experiments in Main Text} \label{sec:expdesc}

For ImageNet experiments (Figures \ref{fig:sparsified vs. smoothgrad}-b and \ref{fig:sparsevrel}-a,b), we use ResNet-50, using the model pre-trained on ImageNet that is provided by {\it torchvision.models}, and images were pre-processed according to the recommended procedure for that model. In all of these figures, data are from the $ILSVRC2012$ validation set, sample size is 64, and the main data lines represent the $60^{th}$ percentile in the sample of the calculated robustness certificate. Error bars represent the $48^{th}$ and $72^{th}$ percentile values, corresponding to a $95\%$ confidence interval for the population quantile. \\
For CIFAR-10 experiments, we train a ResNet-18 model on the  CIFAR-10 training set (with pixel intensities normalized to $\sigma = 1, \mu= 0$ in each channel) using Stochastic Gradient Descent with Momentum as implemented by PyTorch \citep{paszke2017automatic}. The following training parameters were used:

\begin{table}[h]\centering
\caption{Hyper-parameters used in model training for CIFAR experiments.}

\begin{tabular}{|c|c|}
     \hline
     Momentum & 0.9\\
     \hline
     $L_2$ regularization parameter & .0005\\
          \hline
     Epochs (max) & 375\\
          \hline
     Learning Rate Schedule & .1 (epoch $<$ 150), .01 (epoch $\geq$ 150) \\
          \hline
     Batch Size & 128 \\
          \hline
\end{tabular}
\end{table}

Twenty percent of the training data was used for validation, and early stopping was used to maximize accuracy relative to this validation set.
For adversarial attacks on CIFAR samples, we train a version of ResNet-18 with SoftMax activations instead of ReLU. The adversarial attack used was the $L_2$ attack described in Algorithm \ref{alg:l2attack}, with $P=20, Q=100, T=5$. When adversarially attacking images smoothed with $q = 8192$ perturbations with this model, fewer perturbations are are used (512). In these experiments, the sample size is 144. Error bars represent the $95\%$ confidence interval of the population mean. In Figure \ref{empirical}, instances where the adversarial attack failed were not counted at each point. 
For MNIST experiments in Section \ref{sec:experiments-smooth-methods}, the model architecture, attack method, and baseline training method are the same as used for the adversarial training experiments, detailed in appendix \ref{sec:advtrainarch} above. The adversarially training for classification uses the implementation provided in \citet{rony2019decoupling}, with default arguments.
\section{Additional Example Images}
See Figure \ref{cifarexamples.}.
\begin{figure}[h]
    \centering
    \includegraphics[scale=.35]{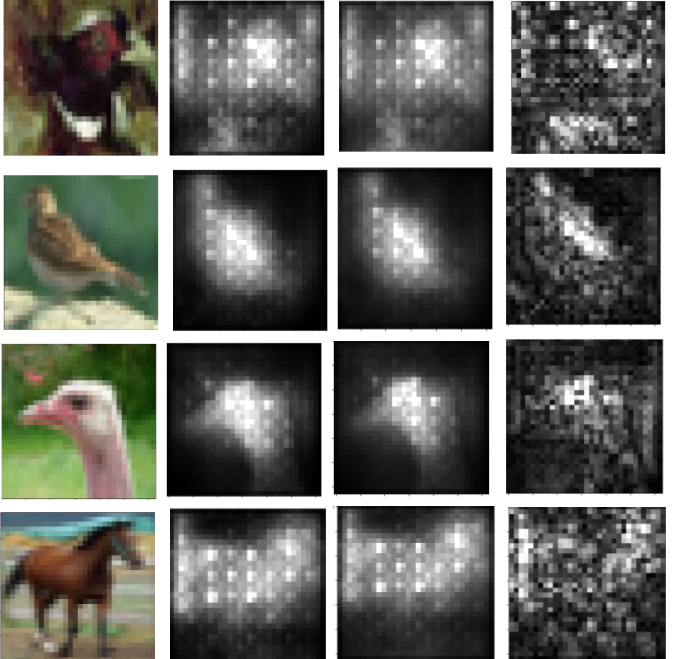}
    \caption{An illustration of different saliency maps on some images from CIFAR-10. The input image is shown in the first column (far left), with interpretations using Relaxed Sparsified SmoothGrad ($\tau = .1, \gamma = .01$, second column from left), Quadratic SmoothGrad (third column), and SmoothGrad (fourth column). $\sigma = .2$, and the noise is scaled to the range of pixel intensities of the image. }
    \label{cifarexamples.}
\end{figure}

\section{Bounds for Scaled SmoothGrad, Quadratic SmoothGrad, and Relaxed Sparsified SmoothGrad with $\gamma=0$ }
In the text, we mention that we achieve vacuous bounds for Scaled SmoothGrad, Quadratic SmoothGrad, and Relaxed Sparsified SmoothGrad  with $\gamma=0$. Here are these bounds (Figure \ref{fig:scaledsg}):
\begin{figure} [h!]
    \centering
    \includegraphics[width=0.5\textwidth]{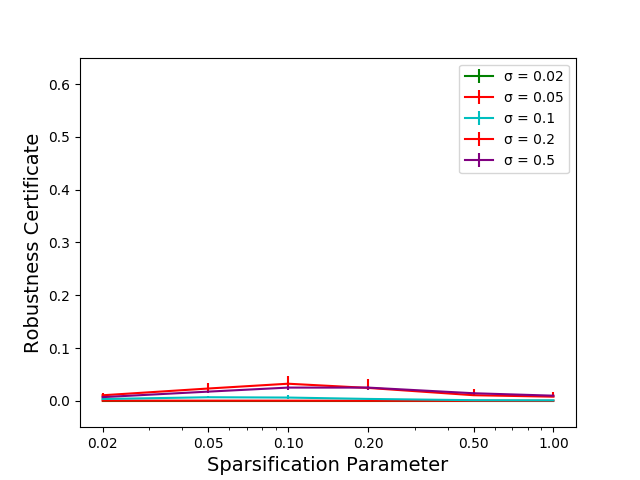}
    \caption{Bounds for Scaled SmoothGrad and and Relaxed Sparsified SmoothGrad  with $\gamma=0$. Note that Scaled SmoothGrad is equivalent to Relaxed Sparsified SmoothGrad  with $\tau=1$. Directly comparable to Figure \ref{fig:sparsevrel}. For the Quadratic case, the certificates were negligibly small: the largest robustness certificate achieved was 0.00023, at $\sigma=0.2$.}
    \label{fig:scaledsg}
\end{figure}
\section{Additional Images from Adversarial Training Experiment (Appendix \ref{sec:adv-train})}
See Figure \ref{fig:additadvTrain}.
\begin{figure}[h!]
    \centering
    \includegraphics[ width=.7\textwidth]{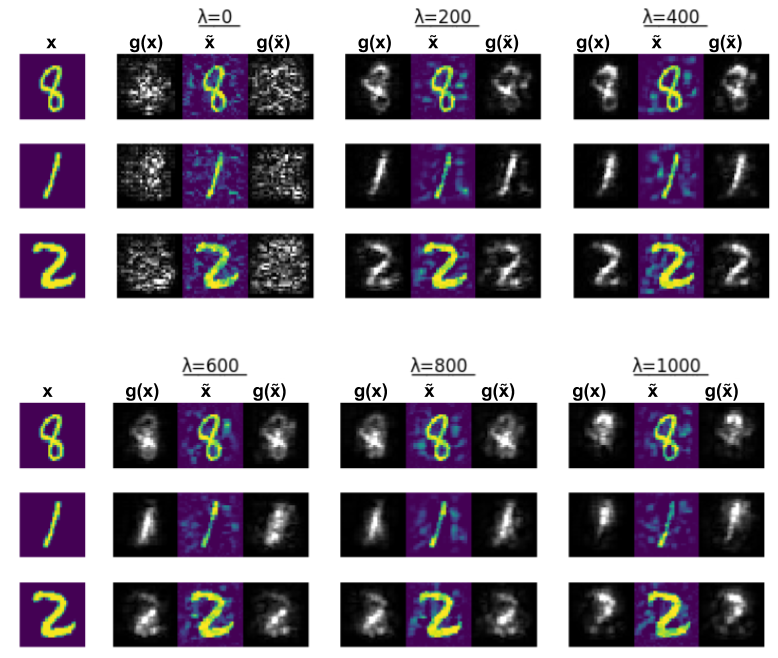}
    \caption{Additional figures from adversarial training on MNIST, for various $\lambda$. Note that Figure \ref{fig:advtrain} shows for $\lambda = 200$.}
    \label{fig:additadvTrain}
\end{figure}

\section{Comparison of Bounds to Empirical Performance for Relaxed Sparsified SmoothGrad.}
We present a detailed view of Figure \ref{empirical}, for small magnitude perturbations, with the robustness certificate shown. (Figure \ref{empirical-small}) 
\begin{figure} [h!]
    \centering
        \includegraphics[,width=.7\textwidth]{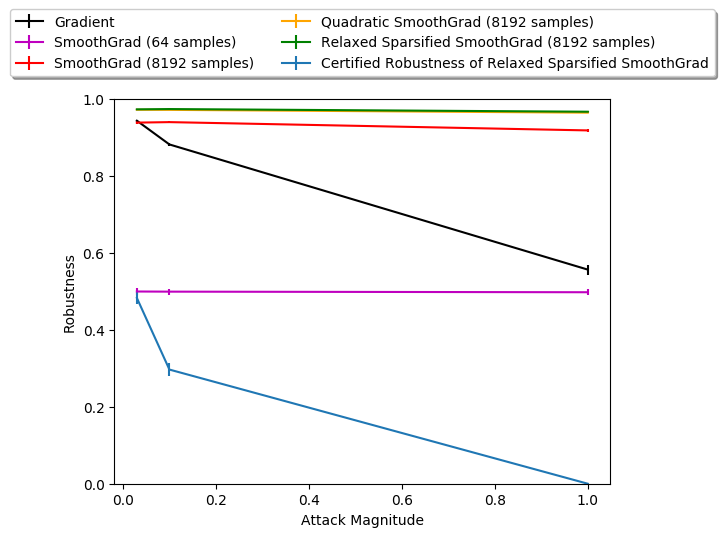}
        \caption{Empirical robustness of variants of SmoothGrad to adversarial attack, tested on CIFAR-10 with ResNet-18. Attack magnitude is in units of standard deviations of pixel intensity. Robustness is measured as $R(\bx,\tbx,K)/K$, where $K=n/4$}
        \label{empirical-small}
\end{figure}
\section{Comparison to Bounds in \citet{lecuyer2018certified}}
\citet{lecuyer2018certified} approaches the classification case for certified robustness by smoothing, by using bounds directly comparably to Theorem \ref{mainbound}, but applying them to the class score elements, rather than the saliency map elements: bounds are certified by demonstrating that the top class score is certifiably larger than all other class scores. However, as noted by \citet{cohen}, these bounds are rather loose, and \citet{cohen} gives significantly tighter bounds specifically for classification case, which we extend to apply to interpretation.
In Figure \ref{fig:leccompare}, we compare our bounds for interpretation to a straightforward application of \citet{lecuyer2018certified}'s results for class scores to saliency scores. Note that \citet{lecuyer2018certified}'s results have a free parameter, for which we numerically maximize the bound.

\begin{figure} [h!]
    \centering
    \includegraphics[width=0.6\textwidth]{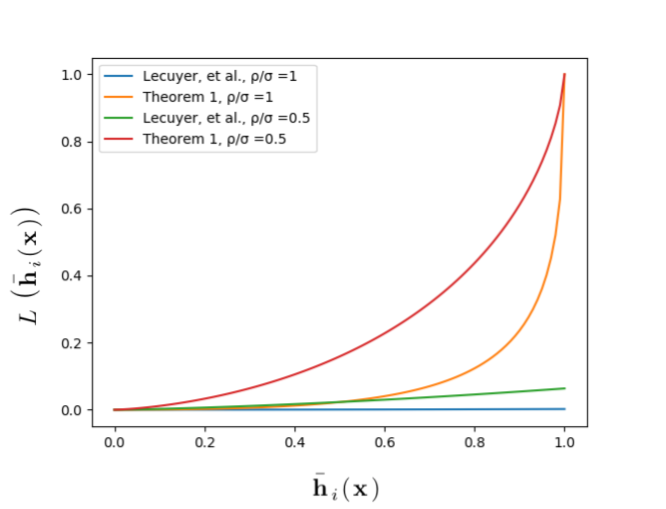}
    \caption{Comparison of Theorem \ref{mainbound} to results from \citet{cohen}}
    \label{fig:leccompare}
\end{figure}

\end{document}